\renewcommand\footnotetextcopyrightpermission[1]{} 
\DeclareMathOperator*{\argmax}{arg\,max}
\newcommand{\vecx}{\mathbf{x}}
\newcommand{\vecy}{\mathbf{y}}
\newcommand{\matK}{\mathbf{K}}
\newcommand{\Ex}{\mathbb{E}}
\newtheorem{theorem}{Theorem}
\newtheorem{lemma}{Lemma}
  \providecommand\BibTeX{{%
    \normalfont B\kern-0.5em{\scshape i\kern-0.25em b}\kern-0.8em\TeX}}}
\begin{document}

\title[UnRAvEL-Select Wisely and Explain]{Select Wisely and Explain: Active Learning and Probabilistic Local Post-hoc Explainability}

\author{Aditya Saini}
\affiliation{%
  \institution{IIITD}
  \city{New Delhi}
  \country{India}}
\email{aditya18125@iiitd.ac.in}

\author{Ranjitha Prasad}
\affiliation{%
  \institution{IIITD}
  \city{New Delhi}
  \country{India}}
\email{ranjitha@iiitd.ac.in}

\begin{abstract}
Albeit the tremendous performance improvements in designing complex artificial intelligence (AI) systems in data-intensive domains, the black-box nature of these systems leads to the lack of trustworthiness. Post-hoc interpretability methods explain the prediction of a black-box ML model for a single instance, and such
explanations are being leveraged by domain experts to diagnose the underlying biases of these models. Despite their efficacy in providing valuable insights, existing approaches fail to deliver consistent and reliable explanations. In this paper, we propose an active learning-based technique called UnRAvEL (Uncertainty driven Robust Active Learning Based Locally Faithful Explanations), which consists of a novel acquisition function that is locally faithful and uses uncertainty-driven sampling based on the posterior distribution on the probabilistic locality using Gaussian process regression (GPR). We present a theoretical analysis of UnRAvEL by treating it as a local optimizer and analyzing its regret in terms of instantaneous regrets over a global optimizer. We demonstrate the efficacy of the local samples generated by UnRAvEL by incorporating different kernels such as the Matern and linear kernels in GPR. Through a series of experiments, we show that UnRAvEL outperforms the
baselines with respect to stability and local fidelity on several real-world models and datasets. We show that UnRAvEL is an efficient surrogate dataset generator by deriving importance scores on this surrogate dataset using sparse linear models. We also showcase the sample efficiency and flexibility of the developed framework on the Imagenet dataset using a pre-trained ResNet model.
\end{abstract}



\keywords{Explainable AI, Gaussian Process, Active Learning, Uncertainty reduction}

\maketitle
\pagestyle{plain} 

\begin{figure*}
    \centering
    \begin{subfigure}[t]{0.3\linewidth}
    \includegraphics[width=\textwidth]{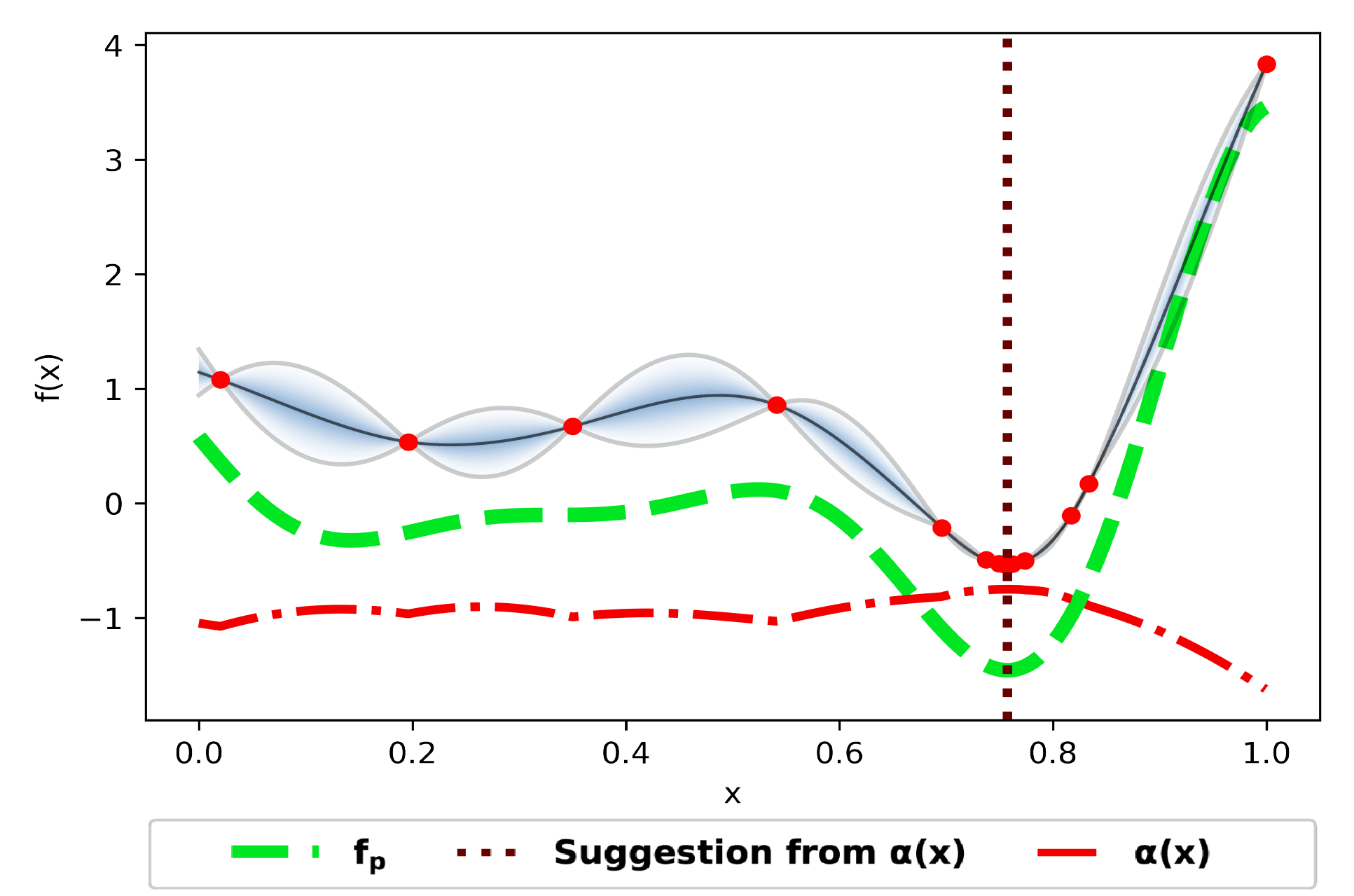}
        \caption{UCB(Upper confidence bound)}
  \end{subfigure}
    \begin{subfigure}[t]{0.3\linewidth}
    \centering    \includegraphics[width=\textwidth]{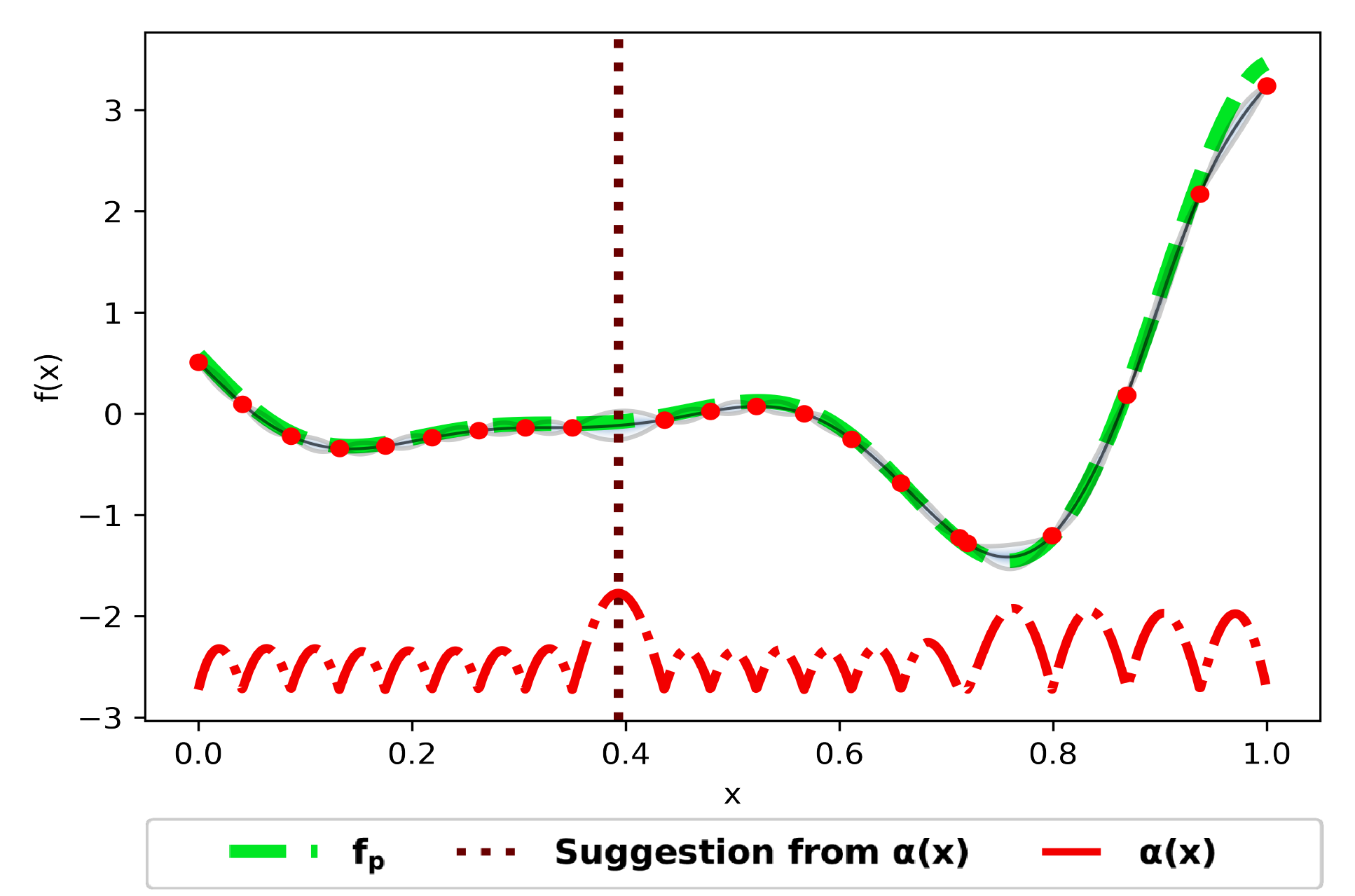}
    \caption{UR(Uncertainty Reduction)}
  \end{subfigure}
  \begin{subfigure}[t]{0.3\linewidth}
    \centering    \includegraphics[width=\textwidth]{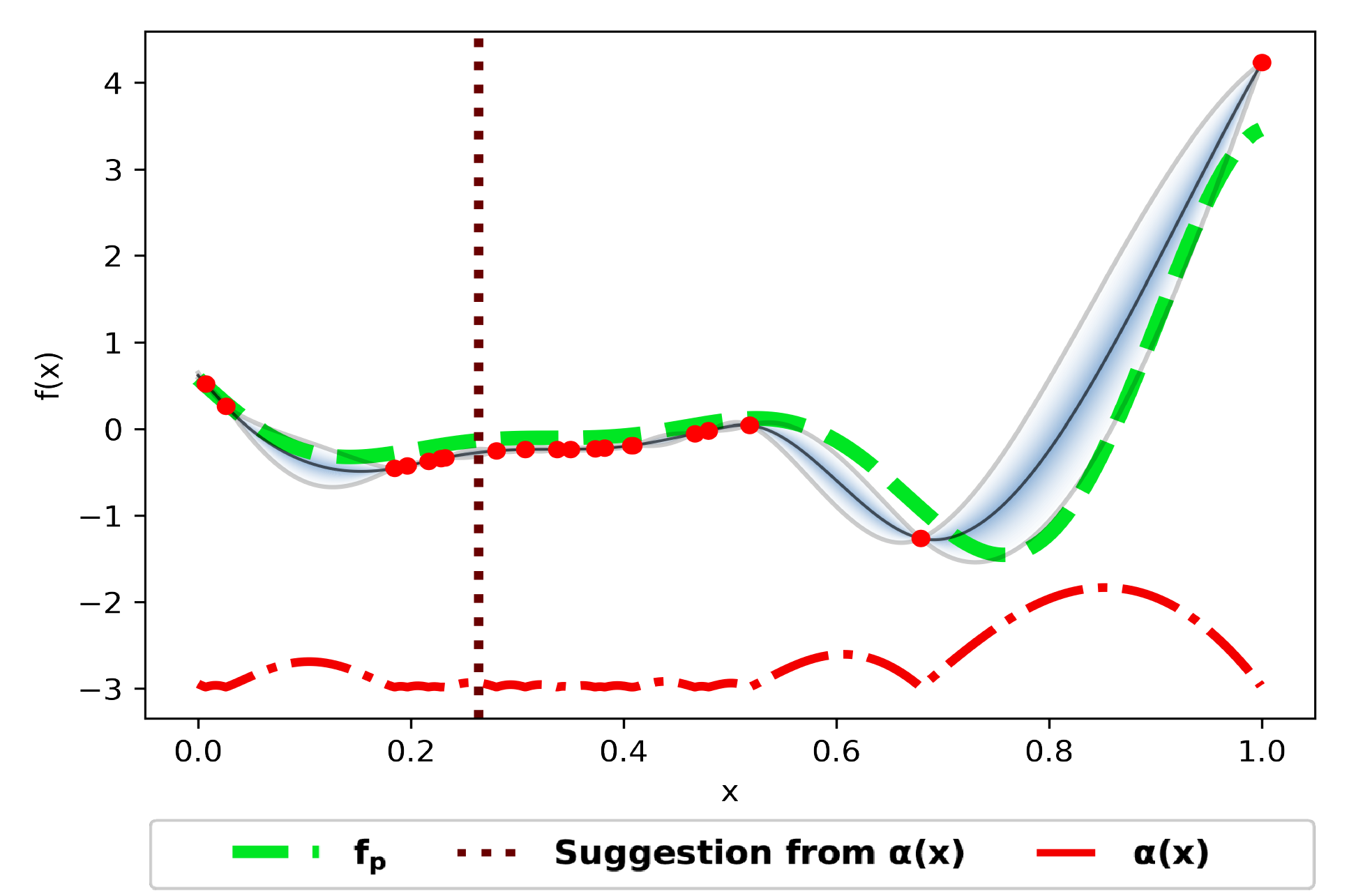}
    \caption{FUR(Faithful Uncertainty Reduction)}
  \end{subfigure}
    \caption{Optimizing the one-dimensional Forrester function\cite{forrester}: Comparison of the convergent iteration of UCB, UR and the proposed FUR acquisition functions}
    \label{fig:alpha_comparison_plot_intro}
    
\end{figure*}

\section{Introduction}

Modern-day applications generate staggering amounts of data, and extracting insightful information from this data is a challenging task. Recent advances in machine learning and deep learning promise advancements towards autonomous systems that will perceive, learn, decide, and act without the need for any human intervention. However, one of the most significant drawbacks of current models is their inability to explain their decisions and actions to humans. Recent studies\cite{social_attack, OnepixelFooling, healthcare_attack} have demonstrated that when deep neural networks are subject to adversaries, severe biases are induced in their decision-making process, hindering their prowess in sociological and healthcare-related domains. Unsurprisingly, several government authorities worldwide have mandated the need to provide explanations on predictions to end customers \cite{adadi2018peeking}. These aspects motivate research on explainable AI (XAI), notably feature attribution methods. Recently, several techniques have been proposed in the context of explainable AI. These methods can be classified as being local \cite{LIME, SHAP, ALIME, DLIME, OptiLIME} or global \cite{SHAP}, model agnostic \cite{LIME, SHAP} or model specific \cite{Deeplift}, intrinsic or post-hoc based, perturbation or saliency-based \cite{ALIME}, concept-based or feature-based\citep{adadi2018peeking}, etc.

Post-hoc, perturbation-based techniques like LIME\cite{LIME} and KernelSHAP \cite{SHAP} are some of the most popular methods for providing model agnostic explanations in the locality of a given instance (which we refer to as the \emph{index sample}). By producing weighted perturbations (surrogate data) in the neighborhood of the instance of interest, these techniques employ locally weighted methods to obtain per-feature importance weights. Despite the widespread usage of these techniques, subsequent works have pointed various issues. For instance, LIME leads to inconsistent explanations on a given sample \cite{Indices, BayLIME, DLIME, ALIME, OptiLIME}, hampers its use in safety-critical systems. Although KernelSHAP successfully counters the stability issue, it employs training data for explanations. In addition, its vulnerabilities to adversarial attacks \cite{FoolingLIME} can attract malicious counterfeit explainer attacks. Essentially, carefully crafted prediction models can hide their biases by exploiting the vulnerabilities inherent in the out-of-distribution surrogate datasets generated by these local techniques. 

An insight derived from the literature\cite{XXAI} is that the lack of structure in the sampling process hampers the quality of surrogate data. In the extreme case, the data tends to be out of distribution, leading to information gain towards adversarial unsteady signals, which produce inconsistent explanations across iterations. This issue was addressed using focused sampling strategies \cite{Slack2020ReliablePH}, which smartly select high information samples from the surrogate dataset and use a linear model for explanations. The inductive bias in linear explainer models limits their capacity in guiding the sample selection process. In addition, LIME, KernelSHAP weigh the random samples, introducing additional hyperparameters that are very hard to choose optimally via cross-validation\cite{OptiLIME, BayLIME}. 

In this work, we propose UnRAvEL(Uncertainty driven Robust Active Learning Based Locally Faithful Explanations), which incorporates Active Learning based sampling and Gaussian process regression (GPR) for generating informative surrogate datasets, which can be combined with several simple models for obtaining explanations. Driving inspiration from active learning strategies like uncertainty sampling \cite{UR} and the exploration-exploitation structure of the Bayesian optimization technique UCB (Upper Confidence Bound) \cite{UCBsrinivas2010gaussian}, we propose a novel acquisition function called \textbf{FUR(Faithful Uncertainty Reduction)}. UnRAvEL intertwines sampling with GPR, which leads to a greedy trade-off between local fidelity via FUR and information gain via GPR. This is illustrated in Fig.~\ref{fig:alpha_comparison_plot_intro}, where it can be seen that at a later iteration, FUR greedily samples close to the index sample, whereas UR and UCB do not show any affinity to the index sample. UnRAvEL interprets locality in a probabilistic sense, and sampling a surrogate dataset is based on the posterior density obtained from GPR. After obtaining the surrogate dataset, different approaches are considered for getting feature importance scores. An ARD explainer \citep{ARDpaananen} is employed for probabilistic explanations using the inverse length parameters of the covariance function, while UnRAvEL-LIME is proposed where a sparse linear model obtains the feature importance weights. We theoretically analyze FUR, which we consider a local optimizer, and explore the regret behavior compared to a global optimizer \cite{kim2020local}. A summarized workflow of UnRAvEL is depicted in Fig.~\ref{fig:main_workflow}.

We demonstrate the performance of UnRAvEL on several real-world classification and regression datasets. We show that UnRAvEL outperforms the benchmarks such as LIME and BayLIME in terms of stability, measured using the Jaccard distance. We also show that faithful explanations using the surrogate dataset in UnRAvEL are achieved using small number of surrogate data samples. We demonstrate the performance of UnRAvEL at low sample efficiency and compare its performance with LIME. Specifically, we consider the  Imagenet\citep{imagenet} dataset and the pre-trained ResNet-18\citep{resnet} model and compare its explanations with LIME and the popular image-based feature/importance attribution technique known as  GradCAM \cite{GradCAM}.

\begin{figure*}
    \centering
     \includegraphics[width=0.9\textwidth]{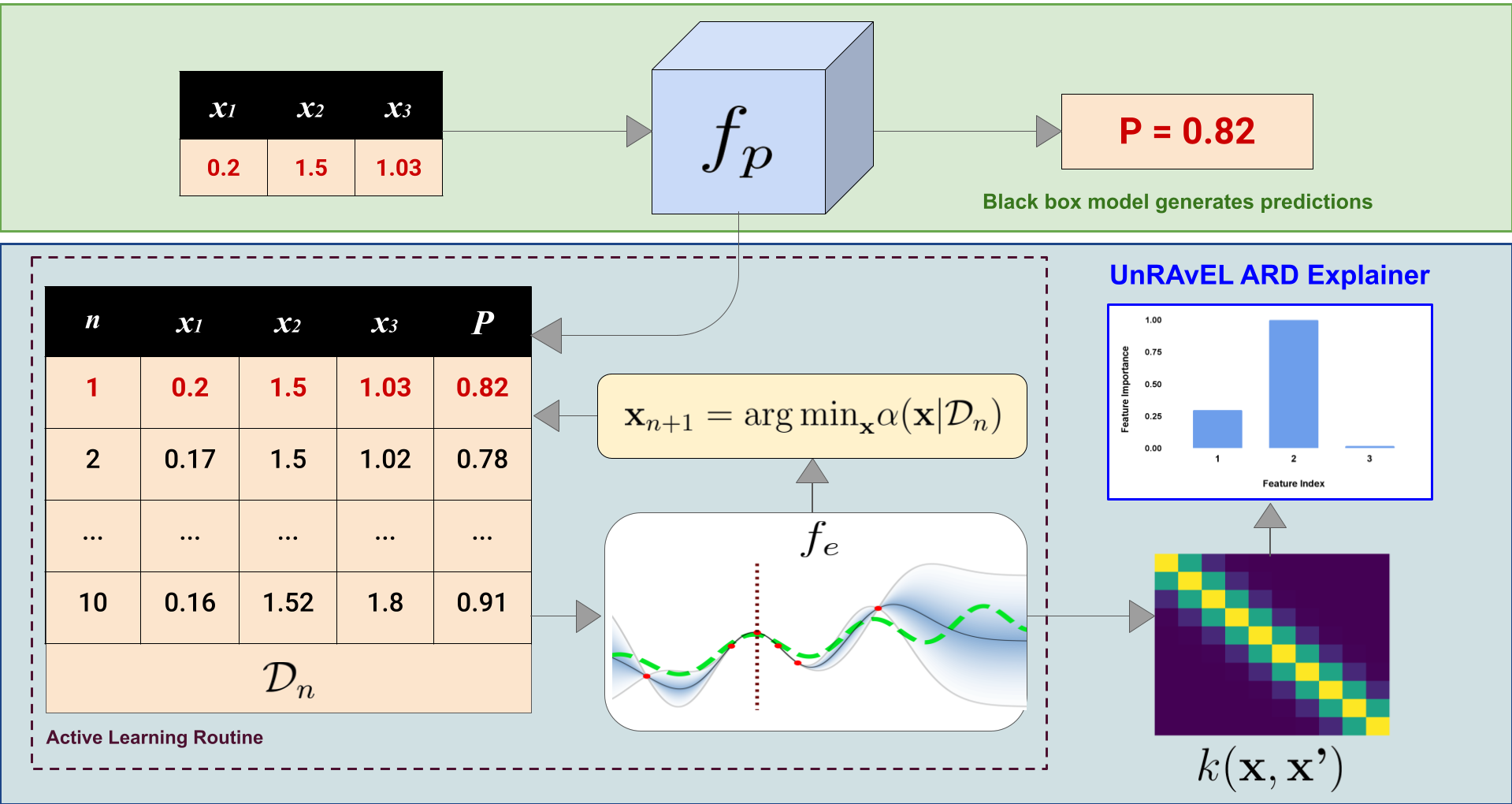}
    \caption{Workflow of UnRAvEL: Using the index sample (in red), the active learning module maximizes $\alpha(\vecx|\mathcal{D}_n)$ to  generate the surrogate dataset $\mathcal{D}_n$. The importance scores can be obtained directly from the kernel function of the GPR (ARD-explainer) or by using a linear explainer on the surrogate data (UnRAvEL-LIME). }
    \label{fig:main_workflow}
\end{figure*}

\section{Novelty and significance}

In this work, we are interested in perturbation-based model-agnostic post-hoc locally interpretable models. Among the well-known methods, LIME \citep{LIME} is one of the earliest popular approaches. We discuss the related works by highlighting the critical aspects of these works and discussing the impact of these aspects.\\
\textbf{Instability}: Several works have reported issues in LIME such as instability or inconsistency in the  explanations over several iterations \citep{DLIME,ALIME}. It is well-accepted that this inconsistency occurs due to random perturbation-based surrogate datasets. A deterministic hierarchical clustering approach for consistent explanations was proposed in DLIME\citep{DLIME}. DLIME requires the training data for clustering. Furthermore, the unequal distribution of points across clusters affected the fidelity of explanations. To avoid the additional task of 'explaining the explainer,' methods like ALIME\citep{ALIME,SmartSamplingGAN} to reduce instability are not preferred. A slight improvement to LIME is the parametric Bayesian method as proposed by \cite{BayLIME}, where a weighted sum of the prior knowledge and the estimates based on new samples obtained from LIME is used to get explanations in a Bayesian linear regression framework. Both LIME and BayLIME employ hyperparameters (kernel-width) that need to be tuned for sampling. In \cite{OptiLIME}, there is the freedom to choose the best adherence-stability trade-off level allowing the practitioner to decide whether the explanation is reliable, necessitating manual intervention. Recently, \cite{Slack2020ReliablePH} proposed a technique known as focused sampling, which utilizes uncertainty estimates to query the black box. \\
\textbf{Bayesian Methods and Uncertainty:} Few recent works have adopted a Bayesian formulation to explain black-box models. In \cite{guo2018explaining}, a Bayesian non-parametric approach was introduced to fit a global Dirichlet mixture across instances. BayLIME \cite{BayLIME} incorporates informative priors to improve the stability of the resulting explanations. In \cite{Slack2020ReliablePH}, the authors focus on modeling the uncertainty of local explanations to sample better from a randomly sampled dataset. In contrast, FUR sequentially selects every sample in the surrogate dataset based on the information gain. To maintain local fidelity, the notion of locality is incorporated into the acquisition function directly. Hence, there is no need for additional processing based on the kernel width. \\
\textbf{Adversarial Robustness:} Recent literature \cite{FoolingLIME, Fooling_geometry} has shown the danger of carefully crafted adversarial classifiers that target the out of distribution surrogate data to hide the inherent biases present in their decision-making process. The assumption is that the attacker can recreate the surrogate data and over-fit these samples to unbiased targets. As demonstrated in \cite{FoolingLIME}, perturbations methods like LIME and KernelSHAP are prone to such attacks because of their perturbation-driven data sampling routine. Moreover, authors in \cite{XXAI} have also theorized that the surrogate data generation pipeline by SHAP and LIME, though simplistic, is not interpretable because of the distance scaling process. To counter that, we motivate a need for a new sampling procedure that is information-theoretic driven and incorporates a distance calculating strategy that is both interpretable and free of hyperparameters.\\
\textbf{Sample Complexity:} Sample efficiency in post-hoc models is a crucial factor in efficiently obtaining reliable explanations, and there is consensus in the community that explainable models must use as few samples for an explanation as possible \cite{Slack2020ReliablePH}. Approaches such as LIME, KernelSHAP, and BayLIME do not provide any guidance on choosing the number of perturbations, although this issue has been acknowledged \cite{BayLIME}. Using FUR, we provide a framework that converges and ceases sampling as the information gain increases and the surrogate samples get closer to the index sample.

\section{Background and Notation}
LIME, KernelSHAP, BayLIME, and BayesSHAP/BayesLIME \citep{Slack2020ReliablePH} are among the popular model-agnostic, local explanation approaches designed to explain any black-box ML model. These methods explain individual predictions of any classifier/regressor in an interpretable manner by learning an easy-to-interpret ML model locally around each prediction. Each of these methods estimates feature attributions on individual instances, which capture the contribution of each feature on the black box prediction.

Let the sample to be explained be $\vecx_0$, which we refer to as the \emph{index sample}. Let $f_p$ denote the black-box model that takes a data point $\vecx_i$ as input and returns a target, i.e., $y_i = f_p(\vecx_i)$. The goal of local post-hoc  model-agnostic methods is to explain $f_p$ in an interpretable and faithful manner without assuming any knowledge about the internal workings of $f_p$. We build an explainer model $f_e(\cdot)$ which employs a surrogate dataset to obtain feature attributions. LIME and KernelSHAP employ sparse linear models, while BayLIME and BayesLIME/BayesSHAP employ Bayesian linear modeling for explaining the black-box model. Mathematically, LIME and KernelSHAP consider a class of models $\mathcal{G}$, and a locally weighted square loss $L(\cdot)$, such that
\begin{align}
    L(f_p,f_e,\pi_{\vecx}) = \sum_{\vecx_0,\vecx \in \mathcal{X}} \pi_{\vecx}(\vecx_0)(f_p(\vecx_0) - f_e(\vecx))^2,
    \label{eq:basicLIME}
\end{align}
 where $f_e(\cdot) \in \mathcal{G}$, $\vecx_0$ is the index sample, and $\vecx \in \mathbb{R}^d$ is the perturbed sample in the original or sparse representation. LIME and KernelSHAP use different strategies to choose $\pi_{\vecx}(\vecx_0)$. The generative process in BayLIME and BayesLIME/BayesSHAP corresponds to the Bayesian version of the weighted least squares formulation of LIME and KernelSHAP given in \eqref{eq:basicLIME}. In particular, BayesLIME/BayesSHAP utilizes the uncertainty associated with the feature importance and an error term for driving focused sampling.

\section{Active learning for XAI}

In the context of post-hoc explainable models, we consider an active learning strategy where an active learner has three components: $(f_e, \alpha, \mathcal{X})$. Here, $f_e$ is the explainer model initially trained on the index sample $(\vecx_0,f_p(\vecx_0)) \in \mathcal{D}$, $\alpha(\cdot)$ is the querying function that, given a current labeled set $\mathcal{D}$, decides which instance to query next. The post-hoc XAI based active learner returns an explainer model $f_e$ after a fixed number of queries.

\subsection{Gaussian Process (GP) Surrogate Model}
\label{sec:GPDetails}
We employ a Gaussian process-based surrogate model for locally emulating the black-box prediction model. GP has vital advantages over commonly used linear methods in post-hoc XAI, including the ability to fit highly nonlinear functions with minimal risk of over-fitting and capability for uncertainty estimation and quantification \cite{GPRasmussen}. In particular, it \cite{NealGP} shows that the neural networks with infinite hidden units converge to a GP. Gaussian process regression uses a prior distribution on $f_e(\cdot)$ that captures our beliefs about the behavior of $f_p(\cdot)$, and updates this prior with sequentially acquired data $\vecx$ using an active learning-based acquisition function $\alpha(\cdot)$. Surrogate data leads to a likelihood function $p(\mathcal{D}|f_e)$, which is combined with the prior to obtain the posterior distribution, which in turn is the new prior distribution incorporating both our prior beliefs and information from the surrogate data points. 

We model the prior probability distribution of $f_{e}$ using a Gaussian process prior with zero mean and covariance matrix $\mathbf{K}_0$, i.e., $p(f_{e}) = \textnormal{GP}(f_{e}; \mathbf{0}, \mathbf{K}_0)$, which is a favored process prior due to the closed-form nature of posterior density. Conventionally, for $n$ samples, the $(p,q)$-the entry of a covariance matrix is given by  $k(\vecx_p,\vecx_q)$, where $k(\cdot,\cdot)$ is the covariance function. Let the $n$-th ($n < N$) target be $y_n = f_p(\vecx_n)$, and, $\vecy = [y_1, y_2,\hdots , y_n]$. The predictive distribution of $y_{n+1}$ conditioned on $\vecx_{n+1}$ is computed as:
\begin{equation}
    p(y_{n+1}|\vecx_{n+1}, \mathcal{D}_n) = \mathcal{N}(\mu_n(\vecx_{n+1}), \sigma_n^2(\vecx_{n+1})),
\end{equation}
where $\mathcal{D}_n \subset \mathcal{D}$ ,posterior mean is given as $\mu_n(\vecx) = k_n^T(\vecx)\matK_0^{-1}\vecy_n$ and the posterior  variance $\sigma_n^2(\cdot)$ is given by  $\sigma_n^2(\vecx) = k(\vecx,\vecx) - k_n^T(\vecx)\matK_0^{-1} k_n^T(\vecx)$, where  $k_n(\vecx) =  [k(\vecx, \vecx_1), \hdots , k(\vecx, \vecx_n)]$. The choice of the covariance function $k(\cdot,\cdot)$ is crucial, especially in the context of explainable AI as it encodes the inductive bias of the GP. In particular, covariance functions must be able to capture the information about the structure of the underlying black-box function. The proposed framework is capable of incorporating any covariance function.  We choose the well-known Matern kernels and a linear kernel for illustrating the results. 

\begin{figure*}
    \centering
    \begin{subfigure}[t]{\linewidth}
    \centering    \includegraphics[width=1\textwidth]{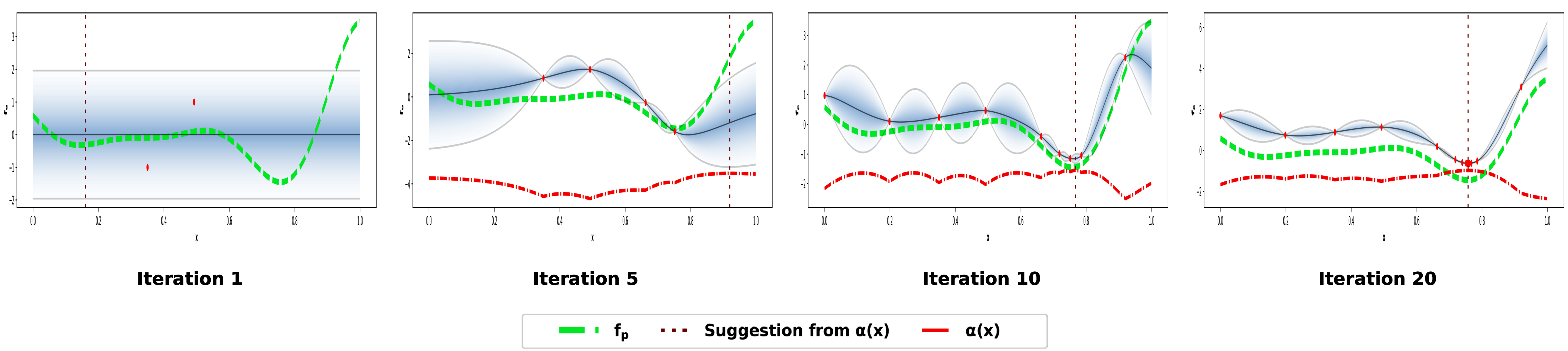}
        \caption{UCB(Upper confidence bound): $\argmax_{\vecx} \alpha(\vecx|\mathcal{D}_{n}) = \argmax_{\vecx} {\vecx + k\sigma_n(\vecx)}$}
  \end{subfigure}
    \begin{subfigure}[t]{\linewidth}
    \centering    \includegraphics[width=1\textwidth]{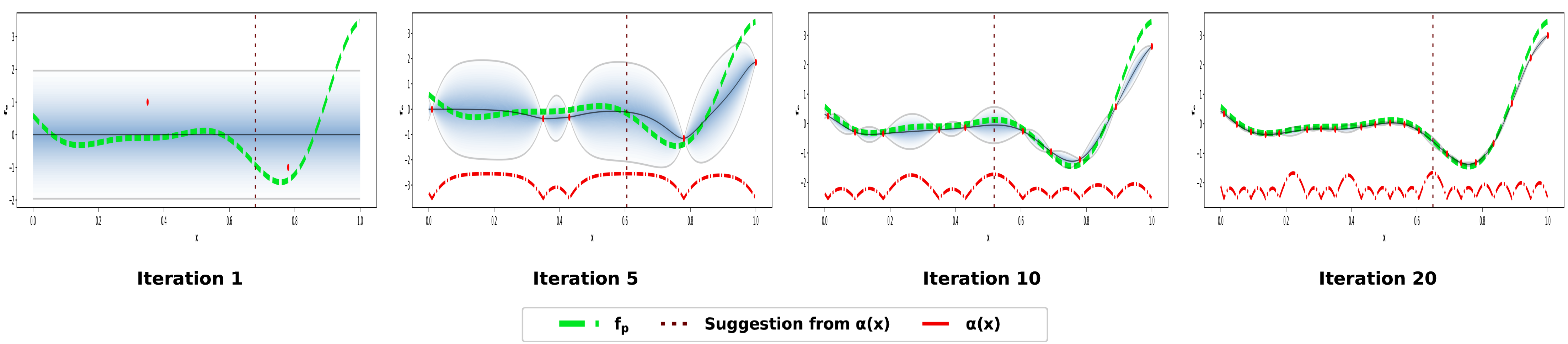}
    \caption{Uncertainty Reduction: $\argmax_{\vecx} \alpha(\vecx|\mathcal{D}_{n}) = 
    \argmax_{\vecx} \sigma_n(\vecx)$}
  \end{subfigure}
  \begin{subfigure}[t]{\linewidth}
    \centering    \includegraphics[width=1\textwidth]{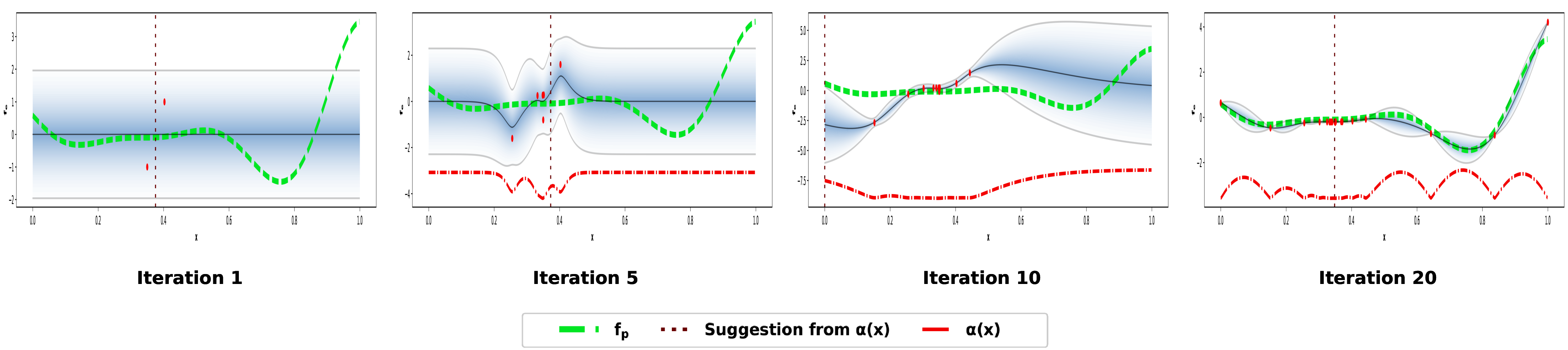}
    \caption{Faithful Uncertainty Reduction: $\argmax_{\vecx} \alpha(\vecx|\mathcal{D}_{n}) = 
    \argmax_{\vecx} -\left\Vert\left(\vecx - \vecx_0 - \frac{\overline{\sigma}\epsilon}{\log(n)}\rm\right) \right\Vert_2 + \sigma_n(\vecx)$}
  \end{subfigure}
    \caption{Optimizing the one-dimensional Forrester function\cite{forrester}: Comparison of different iterations of UCB, UR and the proposed FUR acquisition functions}
    \label{fig:alpha_iteration}
    
\end{figure*}
\subsection{Acquisition Functions}

The big question in post-hoc explainable AI is, \emph{Given the instance to be explained, how do we choose the surrogate data samples?}. Active learning acquisition functions provide a principled answer to this question by determining the sampling regions while balancing exploration and exploitation trade-offs. Maximizing an acquisition function $\alpha(\vecx|\mathcal{D}_n)$ yields the selection of the next point at which to evaluate the objective function, i.e.,
\begin{equation}
    \vecx_{n} = \argmax_{\vecx} \alpha(\vecx|\mathcal{D}_{n-1}).
\end{equation}

Typically, the acquisition function is designed to be an  inexpensive function that can be easily evaluated at a given point. In the context of GP regression, the prior is Gaussian, and the pairwise kernel function specifies the similarity measure between an unobserved point compared to each of the observed points. The posterior is obtained in closed form using the prior likelihood and the kernel function. The mean  $\mu(\vecx)$ and the standard deviation $\sigma(\vecx)$ function forms the basis for designing popular acquisition functions \citep{GPRasmussen}. Along these lines, the most popular acquisition functions include \citep{BORef1} expected improvement (EI), lower/upper confidence bound (LCB/UCB), uncertainty reduction (UR), etc.

Consider the UCB  and UR acquisition functions that lead to a new observation being selected as
\begin{align}
     \textnormal{UCB}&: \vecx_{n} = \argmax_{\vecx} \mu_{n-1}(\vecx) + \sqrt{\beta_{n}}\sigma_{n-1}(\vecx)\nonumber\\
     \textnormal{UR}&:\vecx_{n} = \argmax_{\vecx} \sigma_{n-1}(\vecx).
     \label{eq:UCB-UR}
\end{align}
Note that the goal of these acquisition functions is to obtain the best fit GPR based on the points in a pre-defined domain. 

An essential criterion in locally explainable models is local fidelity. Since it is often impossible for a post-hoc  explanation to be globally faithful unless the complete description of the black-box model and the dataset is available, it is expected that post-hoc models are at least locally faithful, i.e., it must modulate the model behavior in the vicinity of the index sample. Note that the acquisition functions in \eqref{eq:UCB-UR} are not directly applicable in the context of locally explainable models since they do not incorporate anything towards local fidelity. To make this intuition concrete, consider the optimization of the one-dimensional Forrester function ($f(x) = (6x-2) \sin(12x-4)$ in the domain $[0,1]$) in Fig. \ref{fig:alpha_iteration}. It can be seen that the points chosen over several iterations do not necessarily lie in the locality of the index sample $\vecx_0$. In the case of UCB, the acquisition function explores initially but soon turns exploitative and reaches the global optima. However, UR is more exploratory and decreases uncertainty in the entire interval. Hence, it is evident that for any acquisition function to obtain the best possible points in the locality of the index sample, this function must specify the locality. 

In the following Section, we propose a novel acquisition function that fulfills the criterion of local fidelity while providing the advantages that entail GP. 

\section{Proposed Acquisition Function: UnRAvEL}

We propose a novel acquisition function, which we refer to as \emph{Faithful Uncertainty reduction} given as:
\begin{align}
    \vecx_n = \argmax_{\vecx} \underbrace{-\left\Vert\left(\vecx - \vecx_0 - \frac{\overline{\sigma}\epsilon}{\log(n)}\rm\right) \right\Vert_2}_{\textnormal{T1}} + \underbrace{\sigma_n(\vecx)}_{\textnormal{T2}},
    \label{eq:FUR}
\end{align}
where $\overline{\sigma}$ is the empirical mean of the standard deviation of individual features in training data, $\epsilon \sim \mathcal{N}(0, 1)$, $\vecx_0$ is the index sample, and $\sigma_n(\vecx_n)$ is the standard deviation of $f_e$  obtained until the $n$-th sample $\vecx_n$. The novel acquisition function has the following properties:
\begin{itemize}
    \item \textbf{Local Fidelity-Exploration Trade off}: From \eqref{eq:FUR}, we see that T1 ensures that the sampled point $\vecx$ is the nearest in the vicinity of the index sample (a hyper-sphere) whose radius is stochastic but  decays with $n$. The term $\frac{\overline{\sigma}\epsilon}{\log(n)}$ gives wiggle room for the choice of $\vecx$ and allows FUR to explore in the vicinity of $\vecx_0$. By incorporating $\overline{\sigma}$, we ensure that exploration around $\vecx_0$ does not diverge. On the other hand, T2 ensures that the sampled point maximizes information gain \cite{UCBsrinivas2010gaussian}. Hence FUR ensures that local fidelity is achieved in the region close to $\vecx_0$ while ensuring that the acquisition function explores in this vicinity. This nature of FUR, as compared to UCB and UR can be observed in Fig.~\ref{fig:alpha_iteration}. It can be seen for the one-dimensional example that FUR samples in the vicinity of the index-sample.
    \item \textbf{Sample Efficiency}: In T1, the term $\frac{\overline{\sigma}\epsilon}{\log(n)}$ ensures that as $n$ increases, $\vecx$ goes closer to the index sample. Furthermore, as $n$ increases, the information gain from $\sigma_n(\vecx)$ (T2) decreases and hence, T1 dominates. Consequently, we observe that samples co-incide with $\vecx_0$ and FUR converges to an explainable ML model defined in the locality of $\vecx_0$. For the one-dimensional case, this is illustrated in Fig.~\ref{fig:alpha_iteration} at iteration $20$.
    \item \textbf{Surrogate dataset generator}: The proposed acquisition function is used in conjunction with a GP for  obtaining the posterior variance. Hence, UnRAvEL can be viewed as a method for generating the surrogate dataset, which can generate informative samples around the index sample. The surrogate dataset generated with an appropriate kernel can be used with a sparse linear model for obtaining importance scores. 
    \item \textbf{Hyperparameters}: To compensate for the low capacity linear model, LIME and SHAP employ various hyperparameters that are difficult to estimate. The impact of SHAP hyperparameters like K-means center and LIME hyperparameters like kernel-width for weighing local samples have been studied thoroughly in the literature\cite{OptiLIME, XXAI, BayLIME}. To counter this, methods like BayLIME\cite{BayLIME} employ hyperpriors on kernel width, whereas methods like OptiLIME\cite{OptiLIME} introduce a novel hyperparameter-based optimization function. However, none of these techniques are entirely hyperparameter free. In FUR, the structure of the acquisition function allows an interpretable and sample-efficient search process, shaped only by the kernel function where the samples are information-rich and in the vicinity of the sample being explained.
\end{itemize}

\subsection{Importance Scores using UnRAvEL}
GP models are inherently non-parametric, and hence, to measure importance estimates, we rely on the kernel parameters and the posterior distribution. The prominent methods that can be used to obtain importance scores are as follows:\\
    \textbf{ARD Explainer} \citep{ARDpaananen}: This explainer is obtained by using ARD-based kernel function \cite{ARDpaananen}, where the ARD kernel contains an individual length-scale parameter for each input covariate, and the relevance of each covariate is determined by the estimated length-scale value, large (small) values indicate lower (higher) relevance. \\
    \textbf{UnRAvEL-LIME}: In case UnRAvEL is used for surrogate dataset generation, one can use a sparse linear model for explanations. We refer to this as UnRAvEL-LIME as this leads to LIME-like importance scores using a more minor but informative surrogate dataset.
The pseudo-code for UnRAvEL is as given in Algorithm~1.

\begin{algorithm}
 \caption{UnRAvEL: Uncertainty driven Robust Active Learning Based Locally Faithful Explanations}.
 \begin{algorithmic}[1]
 \REQUIRE Black-box model $f_{p}$, Instance $\vecx_0 \in \mathbb{R}^{d}$, $\bar{\sigma}$, $\bm{\sigma_{\mathcal{D}}}$ = $[\sigma_1, \hdots, \sigma_n]$, Maximum iterations $L$, Acquisition function $\alpha(\cdot)$
 \STATE Initialize $\mathcal{D}$ using $(\vecx_0, f_{p}(\vecx_0))$
 \STATE Set exploration domain for $\alpha(\vecx)$: $\vecx \in [\mathbf{x}- \bm{\sigma_{\mathcal{D}}}, \mathbf{x} + \bm{\sigma_{\mathcal{D}}}]$.\\ \STATE Initialize the GPR and the ARD kernel. \\
  \textbf{Active Learning Routine:}
  \FOR {$l = 1$ to $L$}
  \STATE Obtain $\vecx_{l+1}$ by optimizing $\alpha(\vecx)$. $\mathcal{D} \leftarrow \mathcal{D} \cup (\vecx_{l+1},f_p(\vecx_{l+1})$. 
  \STATE Train the GPR based $f_{e}$ model on $\mathcal{D}$.
  \ENDFOR
  \RETURN Surrogate data $\mathcal{D}$, Importance scores using ARD-explainer or UnRAvEL-LIME.
 \end{algorithmic} 
 \end{algorithm}
 
\subsection{Local Optimizer Viewpoint}
It is common to perform regret-based convergence analysis for GPR in scenarios where the final goal is to find a global optimum of optimization problems \cite{srinivasLCB}. These convergence guarantees are only valid when the global optimizer of the acquisition function is obtained at each round and selected as the following query point. In the context of post-hoc locally explainable models, a local optimizer of the acquisition function is used since searching for the global optimum is not the end goal. Straightforward regret analysis is challenging, and hence, performance analysis based on the behavior of the proposed local optimizer in terms of instantaneous regrets as compared to a global optimizer is explored similarly to \cite{kim2020local}. 

We denote by $\vecx_{n,g}$ as the optimizer in round $n$, as determined by a global acquisition method $\vecx_{t,g} = \argmax_{\vecx}\alpha_g(\vecx|\mathcal{D}_{n-1})$, where the reference global acquisition function can be $UCB$ or $UR$ whose convergence is well-known \cite{srinivasLCB}. Further, we denote $\vecx_{n,l}$ as the optimizer (local) of the acquisition function $\alpha_l(\vecx|\mathcal{D}_{n-1})$ in round $n$. Suppose that $\vecx^*$ is the true global minimum of the objective function, and $\vecx_n$ be a maximum of acquisition function in round $n$, determined by either a global or local optimization method. The regret for round $n$ is defined as $r_n = f_e(\vecx_n) - f_e(\vecx^*)$. In this context, we define instantaneous regret differences for an local optimizer $\vecx_{n,l}$ and global optimizer $\vecx_{n,g}$ as $|r_{n,g} - r_{n,l}|=|f_e(\vecx_{n,g})-f_e(\vecx_{n,l})|$. If $f_e$ is Lipshitz continuous and if the search-domain $\mathcal{X}$ is compact, the following theorem holds.

\begin{theorem}
Given $\zeta \in [0, 1)$, $\epsilon_l, \delta_n > 0$, the probability of the regret difference $|r_{n,g} - r_{n,l}|$ for a local optimizer $\vecx_{n,l}$ in round $t$ fulfils the following: 
\begin{equation}
    \Pr(|r_{n,g} - r_{n,l}|<\epsilon_l) \geq 1 - \zeta,
\end{equation}
where $\zeta = 1- \frac{d^n_{0,g}}{\eta_1} - \frac{(1-\beta_0)\gamma}{\eta_1} - \frac{M}{\eta_2}$, $\gamma = \max_{xi,xj \in \mathcal{X}} ||x_i- x_j||_2$, $\eta_1\eta_2 = \epsilon_l+\delta_n$, $\delta_n = |f_e(\vecx_{n,g}) - f_e(\vecx_0)|$, $d^n_{0,g} = \Ex||\vecx_0 - \vecx_{n,g}||_2$ and $\beta_0$ is the probability that the local optimizer co-incides with the index sample.
\end{theorem}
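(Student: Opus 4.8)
The plan is to express the regret difference as a product of a Lipschitz ``slope'' and a distance, and then to bound each factor in probability via Markov's inequality. First I would note that the common optimum term cancels, since $r_{n,g}-r_{n,l} = [f_e(\vecx_{n,g})-f_e(\vecx^*)] - [f_e(\vecx_{n,l})-f_e(\vecx^*)] = f_e(\vecx_{n,g}) - f_e(\vecx_{n,l})$, so the quantity of interest is purely the gap between the two acquisitions' images under $f_e$. Invoking the Lipschitz continuity of $f_e$ on the compact domain $\mathcal{X}$, I would write $|r_{n,g}-r_{n,l}| = S\,\|\vecx_{n,g}-\vecx_{n,l}\|_2$ with local slope $S \le M$, exhibiting the regret difference as a product of two nonnegative quantities.

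Next I would insert the index sample $\vecx_0$ and apply the triangle inequality, $\|\vecx_{n,g}-\vecx_{n,l}\|_2 \le \|\vecx_{n,g}-\vecx_0\|_2 + \|\vecx_0-\vecx_{n,l}\|_2$, isolating the displacement of the global optimizer from $\vecx_0$ and that of the local FUR optimizer from $\vecx_0$. These two displacements are controlled separately: by definition $\Ex\|\vecx_{n,g}-\vecx_0\|_2 = d^n_{0,g}$, while for the FUR optimizer I would use the property that $\vecx_{n,l}$ coincides with $\vecx_0$ with probability $\beta_0$ (a consequence of the shrinking stochastic ball $\frac{\overline{\sigma}\epsilon}{\log(n)}$ in T1) and otherwise stays in $\mathcal{X}$, giving $\Ex\|\vecx_0-\vecx_{n,l}\|_2 \le (1-\beta_0)\gamma$ with $\gamma$ the diameter of the domain.

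The core device is the threshold split $\eta_1\eta_2 = \epsilon_l + \delta_n$. Since $|r_{n,g}-r_{n,l}| = S\cdot D$ with $D=\|\vecx_{n,g}-\vecx_{n,l}\|_2$, the product can exceed $\eta_1\eta_2$ only if $S \ge \eta_2$ or $D \ge \eta_1$, so a union bound gives $\Pr(|r_{n,g}-r_{n,l}| \ge \epsilon_l+\delta_n) \le \Pr(S\ge\eta_2) + \Pr(D\ge\eta_1)$. Applying Markov to each factor, with $\Ex S \le M$ for the slope and the triangle-split bound $\Ex D \le d^n_{0,g} + (1-\beta_0)\gamma$ for the distance, produces exactly the three terms $\frac{M}{\eta_2}$, $\frac{d^n_{0,g}}{\eta_1}$ and $\frac{(1-\beta_0)\gamma}{\eta_1}$; passing to the complementary event then yields the advertised lower bound $1-\zeta$. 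In this accounting $\delta_n = |f_e(\vecx_{n,g})-f_e(\vecx_0)|$ enters as the slack measuring how far the reference global optimizer has itself wandered from the index sample.

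I expect the main obstacle to be the rigorous treatment of the coincidence event underlying $\beta_0$ together with the Markov step on the slope $S$. Establishing $\Ex\|\vecx_0-\vecx_{n,l}\|_2 \le (1-\beta_0)\gamma$ requires showing that, once the stochastic ball in T1 has contracted, the maximiser of FUR is pinned to $\vecx_0$ with the claimed probability, which couples the acquisition's randomness through $\epsilon$ with the posterior-variance term T2; making this precise rather than heuristic is the delicate point. Treating $S$ as a random variable with mean bounded by $M$ so that Markov applies, and verifying that the product-to-union reduction is valid under the joint law of both optimizers, are the remaining technical steps, with the Lipschitz bound and compactness of $\mathcal{X}$ guaranteeing that every quantity is finite and each Markov estimate is well defined.
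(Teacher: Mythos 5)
Your proposal follows essentially the same route as the paper: cancel $f_e(\vecx^*)$ so the regret difference becomes $|f_e(\vecx_{n,g})-f_e(\vecx_{n,l})|$, route everything through the index sample $\vecx_0$, write the quantity as a distance times a Lipschitz ratio, split the threshold as $\eta_1\eta_2=\epsilon_l+\delta_n$, and control the two factors by a union bound with Markov's inequality (using $\Ex\|\vecx_{n,g}-\vecx_0\|_2=d^n_{0,g}$, the coincidence probability $\beta_0$ with diameter $\gamma$ for $\Ex\|\vecx_0-\vecx_{n,l}\|_2$, and $M/\eta_2$ for the ratio) --- exactly the three terms in $\zeta$. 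The only structural difference is where the factorization is anchored: you factor $S\cdot\|\vecx_{n,g}-\vecx_{n,l}\|_2$ directly and fold the paper's Lemma~1 (the triangle split through $\vecx_0$) into the distance factor, whereas the paper first replaces the event $\{|f_e(\vecx_{n,g})-f_e(\vecx_{n,l})|<\epsilon_l\}$ by $\{|f_e(\vecx_0)-f_e(\vecx_{n,l})|<\epsilon_l+\delta_n\}$ and anchors both factors at $\vecx_0$; the ingredients and constants are identical. One caveat applies to both versions: what is actually established is a lower bound on $\Pr(|r_{n,g}-r_{n,l}|<\epsilon_l+\delta_n)$, not on $\Pr(|r_{n,g}-r_{n,l}|<\epsilon_l)$ --- in your write-up this shows up as the otherwise unmotivated choice of threshold $\eta_1\eta_2=\epsilon_l+\delta_n$ for the product $S\cdot D$, and in the paper it shows up as the set inclusion being used in the wrong direction in the final chain of inequalities --- so the last sentence of your argument (``passing to the complementary event then yields the advertised lower bound'') overstates what the preceding steps deliver, exactly as the paper's own proof does.
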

The proof is given in the appendix. Theorem~1 suggests that the regret of a global optimizer defined on $\mathcal{X}$ is comparable to any FUR-like local optimizer as a function of parameters such as $\gamma$, $\beta_0$, $M$ and $d^n_{0,g}$. If the search domain $\mathcal{X}$ is relatively small, then the local and global regret may be close. Further, as $\beta_0$ is close to one as $n$ increases. However, $d^n_{0,g}$ is irrespective of the local optimizer and solely depends upon the global optimization technique chosen. 

\section{Experimental Results}
In this Section, we demonstrate the efficacy of the proposed UnRAvEL framework on publicly-available real-world  datasets concerning attributes like sample efficiency and stability (consistency) in repeated explanations. We employ tabular and image datasets and consider different black-box models for an explanation. To validate the surrogate data generated by UnRAvEL, we motivate the usage of such data in existing explanation pipelines such as \cite{LIME}.

\begin{table}
        \caption{Description of datasets. 'R' denotes regression and 'C' denotes classification task}
  \begin{tabular}{cccccc}
    \toprule
    \textbf{Dataset} & \textbf{Task} &$p$ & {$n_{train}$} & $n_{total}$ & $R^2$ score \\
    \midrule
    \textbf{Parkinson's} & C & 22 & 195 & 175 & 0.80\\
    \textbf{Cancer} & C & 30 & 512 & 569 & 0.98\\
    \textbf{Adult} & C & 14 & 30162 & 45222 &  0.84\\
    \textbf{Bodyfat} & R & 14 & 226 & 252 & 0.99\\
    \textbf{Boston} & R & 13 & 455 & 506 &  0.92\\
    \hline
    \end{tabular}

    \label{tab:dataset_description}
\end{table}
\begin{figure*}
    \centering
    
    \begin{subfigure}[t]{0.22\linewidth}
    \includegraphics[width=0.85\textwidth, height=0.15\textheight]{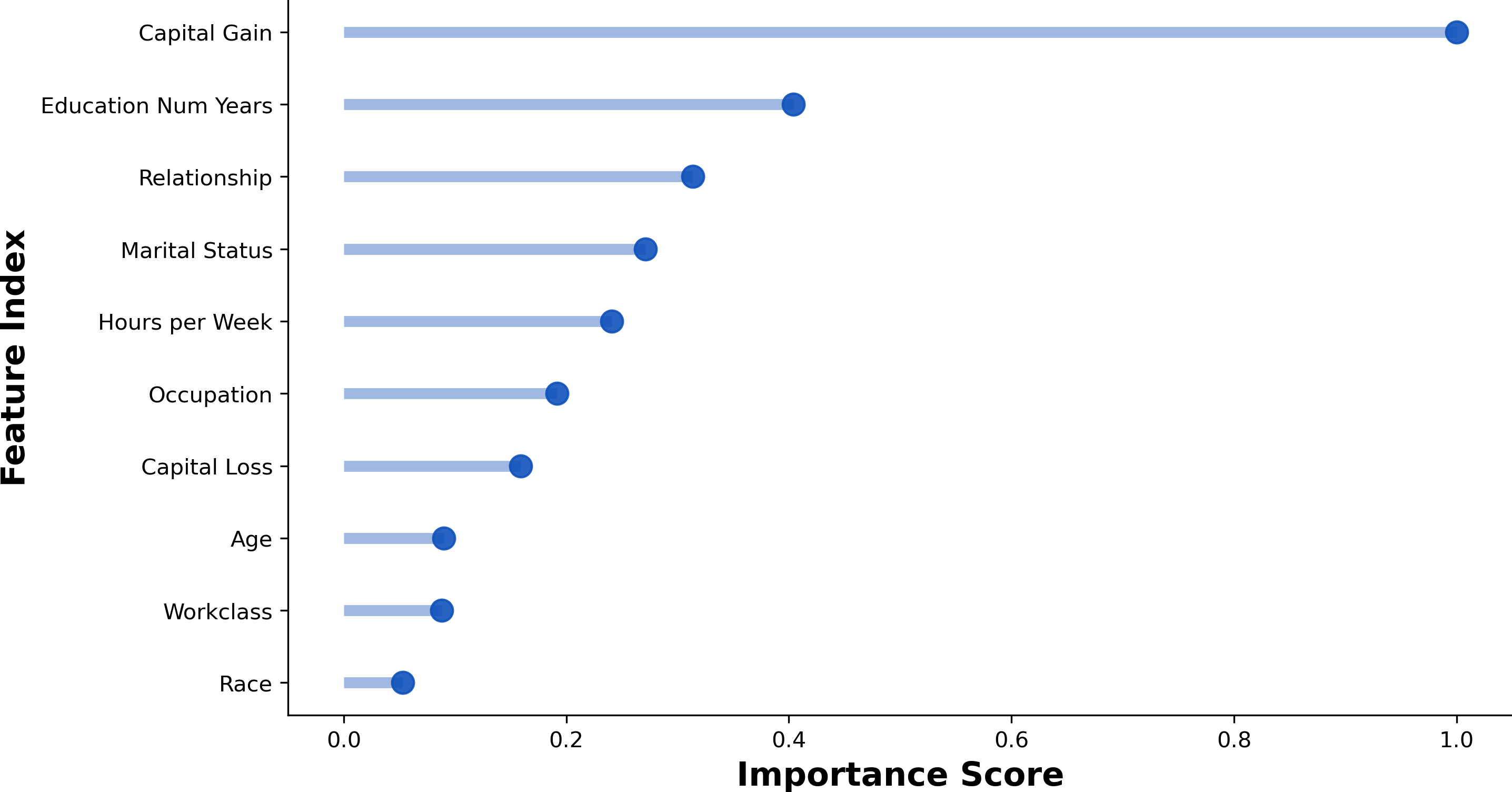}        \caption{Explaining $f_p(\vecx)=0.91$}
    \end{subfigure}
  \begin{subfigure}[t]{0.22\linewidth}
    \centering    \includegraphics[width=0.85\textwidth, height=0.15\textheight]{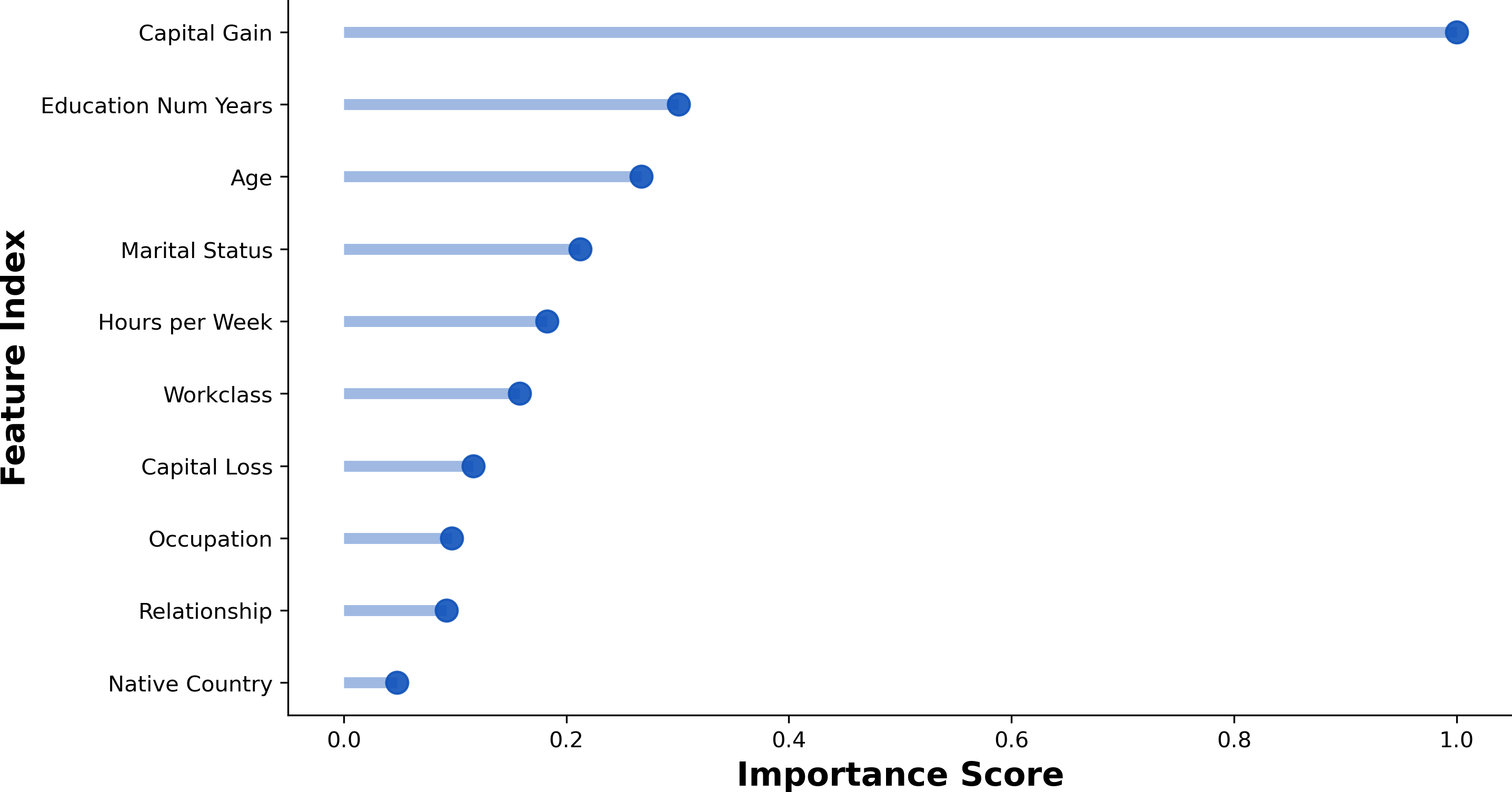}
    \caption{Explaining $f_p(\vecx)=0.05$}
  \end{subfigure}
  \begin{subfigure}[t]{0.22\linewidth}
    \centering    \includegraphics[width=0.85\textwidth, height=0.15\textheight]{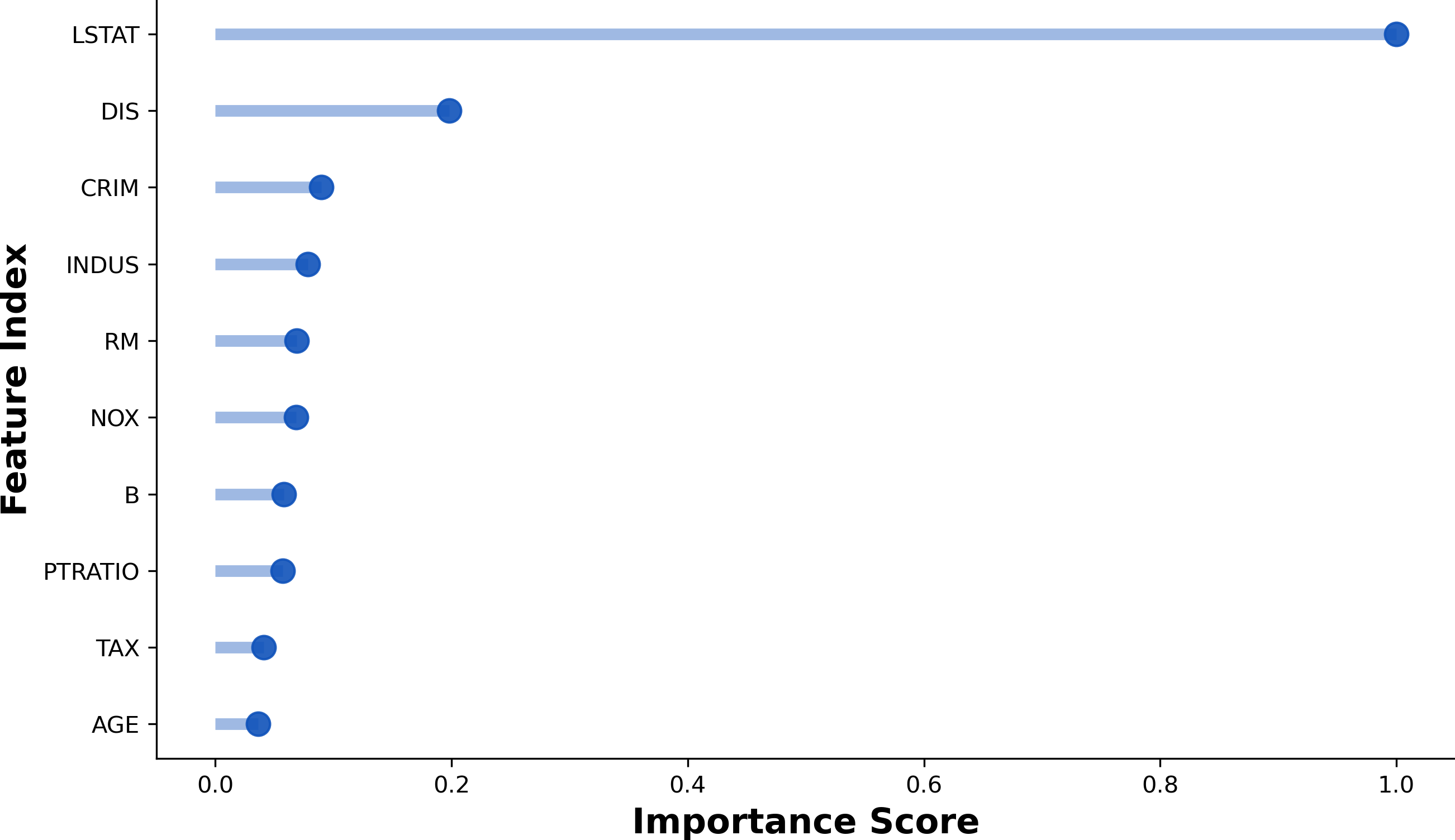}
    \caption{Explaining $f_p(\vecx)=35.73$}
  \end{subfigure}
  \begin{subfigure}[t]{0.22\linewidth}
    \centering    \includegraphics[width=0.85\textwidth, height=0.15\textheight]{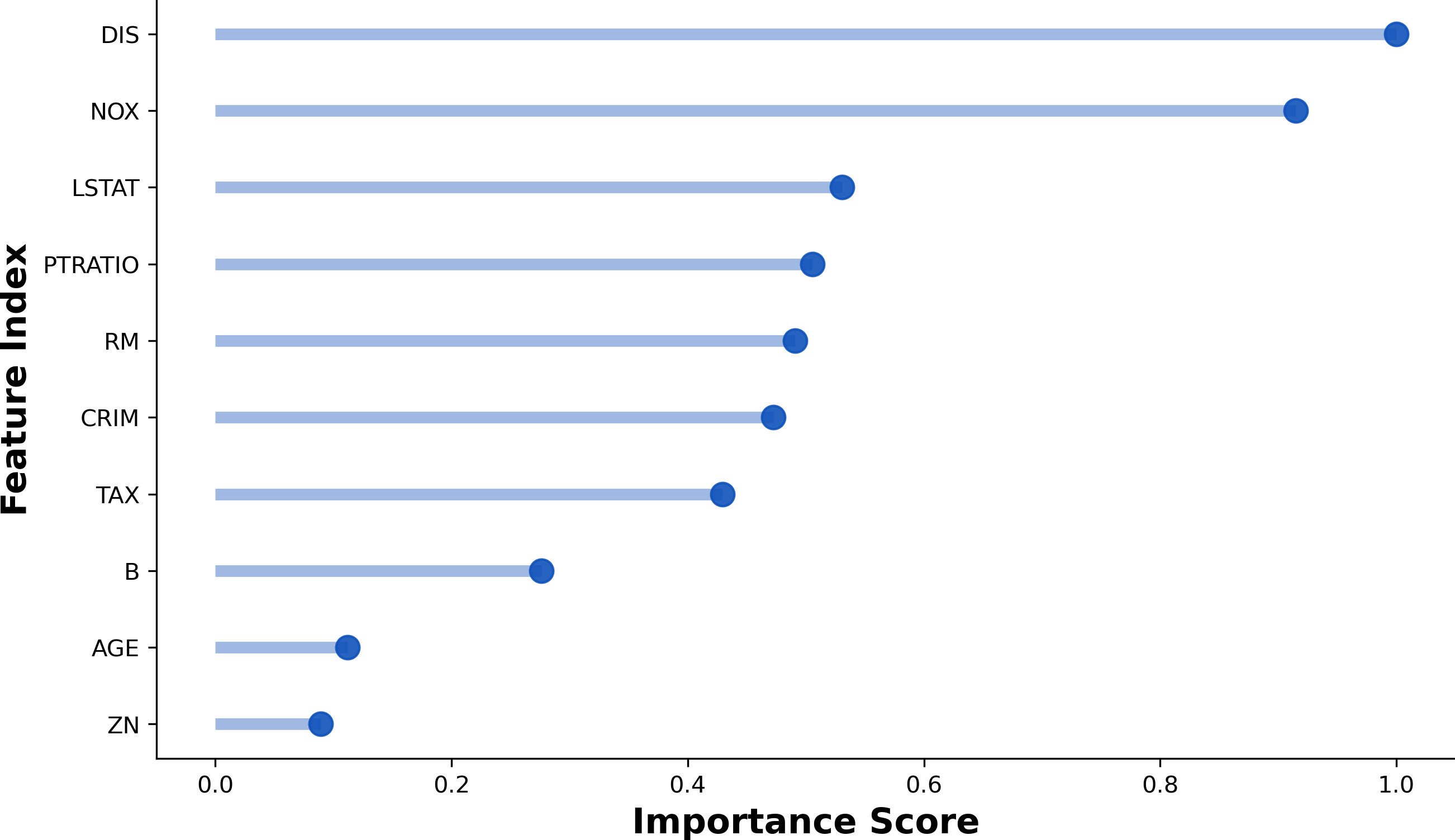}
    \caption{Explaining $f_p(\vecx)=11.85$}
  \end{subfigure}
  \caption{Mean Importance scores for two samples from the Adult dataset(a-b) and Boston house pricing dataset(c-d). Both the datasets have sensitive variables in their setup.}
    \label{fig:mip_comparison}
\end{figure*}

\subsection{Datasets and Pre-processing}

We chose five distinct datasets from the UCI Machine Learning repository \cite{UCI} for the tabular data-based experiments based on their usage in the relevant literature \cite{BayLIME,DLIME,ALIME}, feature novelty, and prediction task. The employed pre-processing pipeline was the same for all the datasets, involving removing all missing values, frequency encoding of categorical features, followed by a standardization procedure per feature. The description of the datasets are as follows: 
\begin{itemize}
    \item Parkinson's: The Parkinson's classification dataset consists of $195$ biomedical recordings of patients suffering and free from Parkinson's disease. With $22$ unique features per recording, the task is to classify whether a given patient has Parkinson's or not.
    
    \item Cancer: The Breast cancer classification dataset consists of $569$ entries, with $30$ features computed from an image of a breast mass, describing characteristics of the cell nuclei present in the image to predict if the cancer is malignant or not. 
    
    \item Adult: The Adult Income classification dataset consists of $14$ features describing the educational, financial, and racial recordings of $45222$ adult individuals. The task is to predict whether the income of a given individual is above $50,000$ US Dollars or not.
    
    \item Boston: The Boston house pricing regression dataset consists of $506$ entries with $13$ features of homes from various suburbs located in Boston. The task is to predict the price of the house based on neighborhood features.
    
    \item Bodyfat: The Bodyfat regression dataset consists of $14$ features, each depicting various physical properties like determining the body fat percentage of $252$ men.
    
\end{itemize}
For simulating the prediction models, we used a Support Vector Classifier for all the classification datasets and an Extra Trees Regressor for all the regression datasets based on their accuracy performance on the test set. A summary of the dataset and prediction model statistics can be found in Table \ref{tab:dataset_description}.

\begin{figure}
    \centering
    \begin{subfigure}[t]{0.45\linewidth}
    \includegraphics[width=1.05\textwidth]{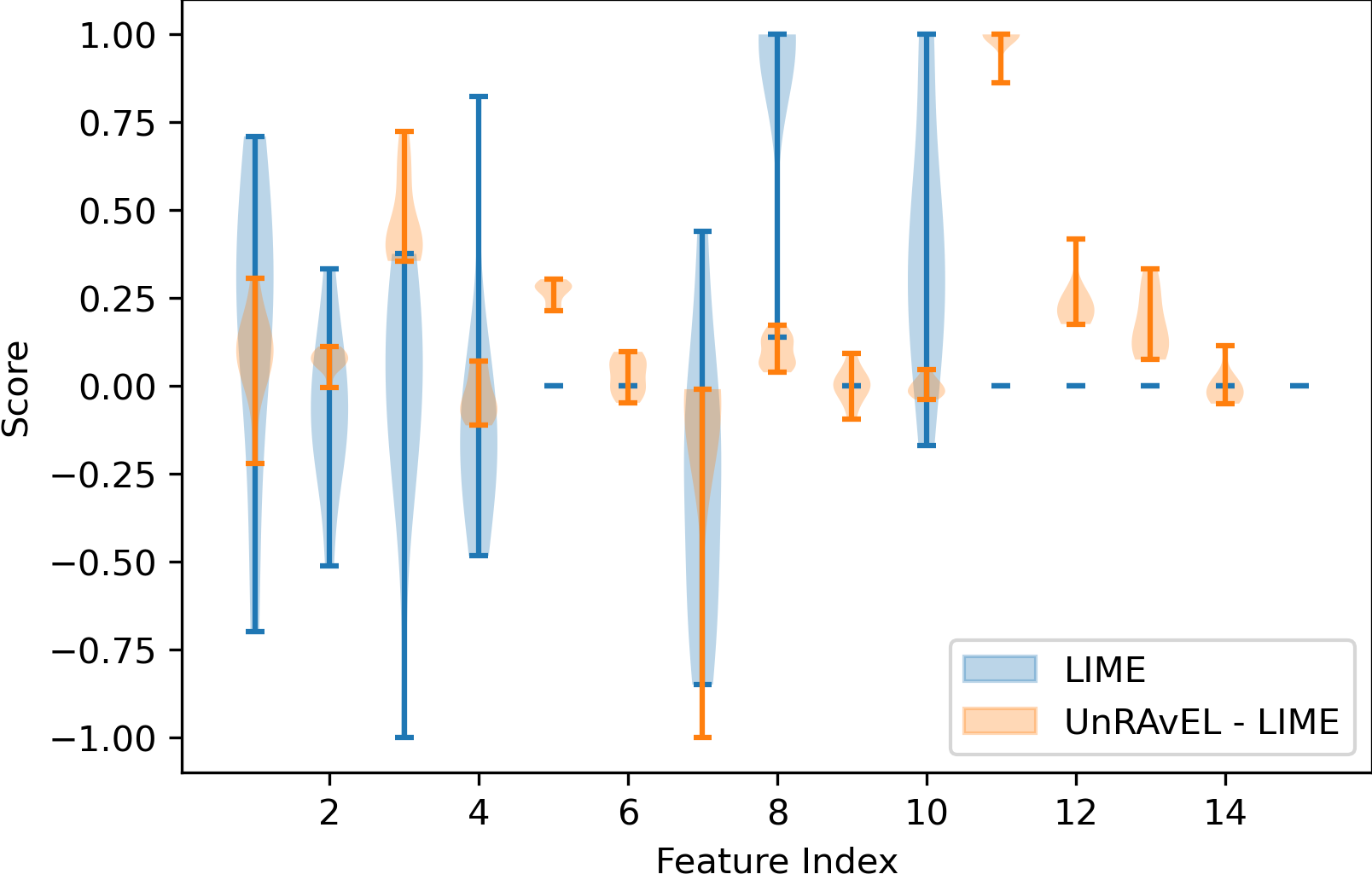}
        \caption{$f_p(\vecx)=0.52$}
    \end{subfigure}
  \begin{subfigure}[t]{0.45\linewidth}
    \centering    \includegraphics[width=1.05\textwidth]{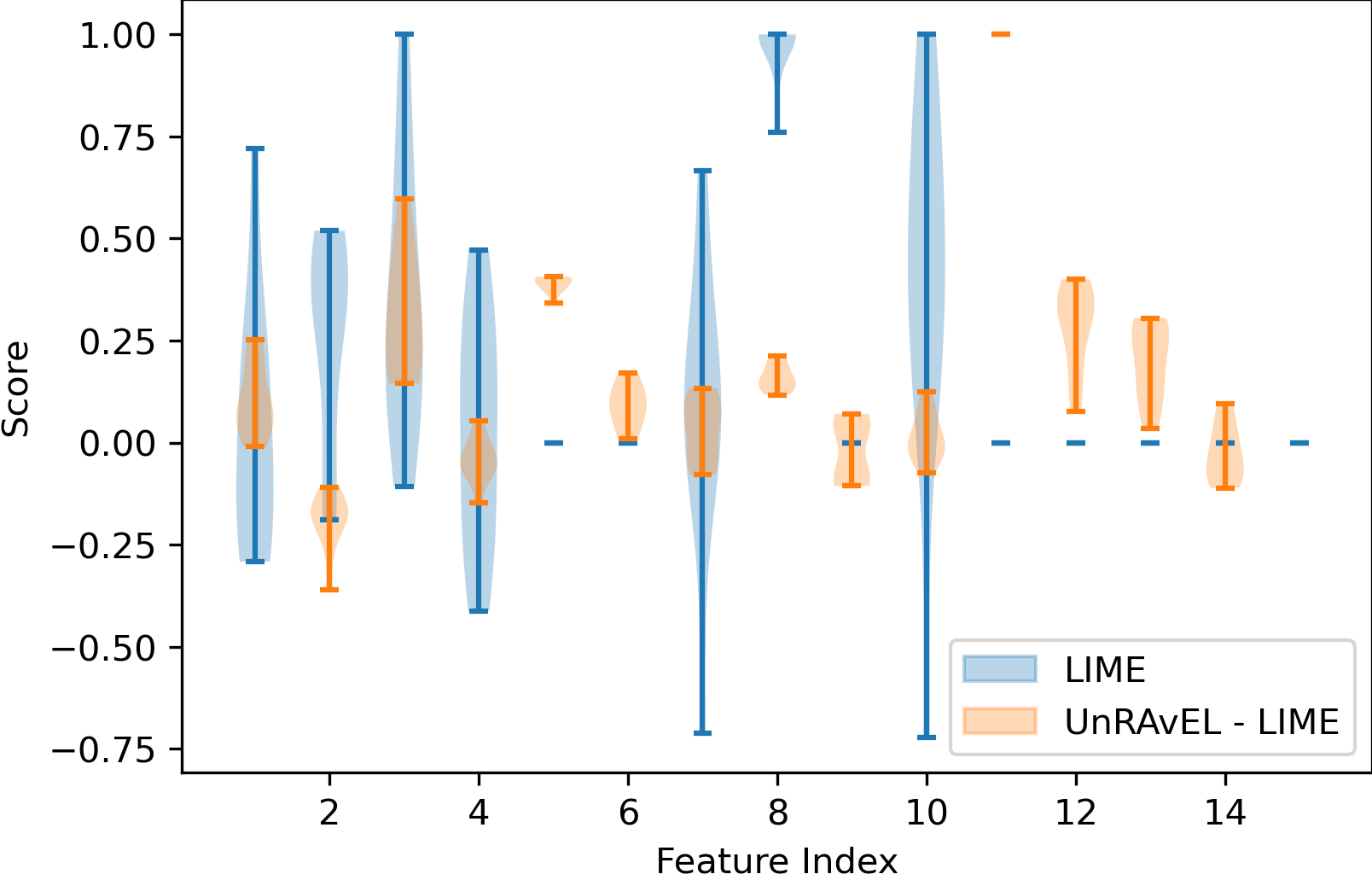}
    \caption{$f_p(\vecx)=0.78$}
  \end{subfigure}
    \caption{Comparison of the explanations generated by LIME and UnRAvEL-LIME, As can be seen, the explanations generated by UnRAvEL-LIME are less uncertain than their LIME counterpart}.
    \label{fig:surrogate_data_comparison}
\end{figure}

\begin{figure*}
\centering
\begin{tabular}{cc@{\hskip 1cm}c@{\hskip 1cm}cc}
\setlength\tabcolsep{0pt}
 \textbf{Target Image} &
 \textbf{Grad-CAM Output} & 
 \textbf{UnRAvEL(100 samples)} &
 \textbf{LIME(100 samples)} & 
 \textbf{LIME(10,000 samples)}\\
 \includegraphics[width=0.15\textwidth]{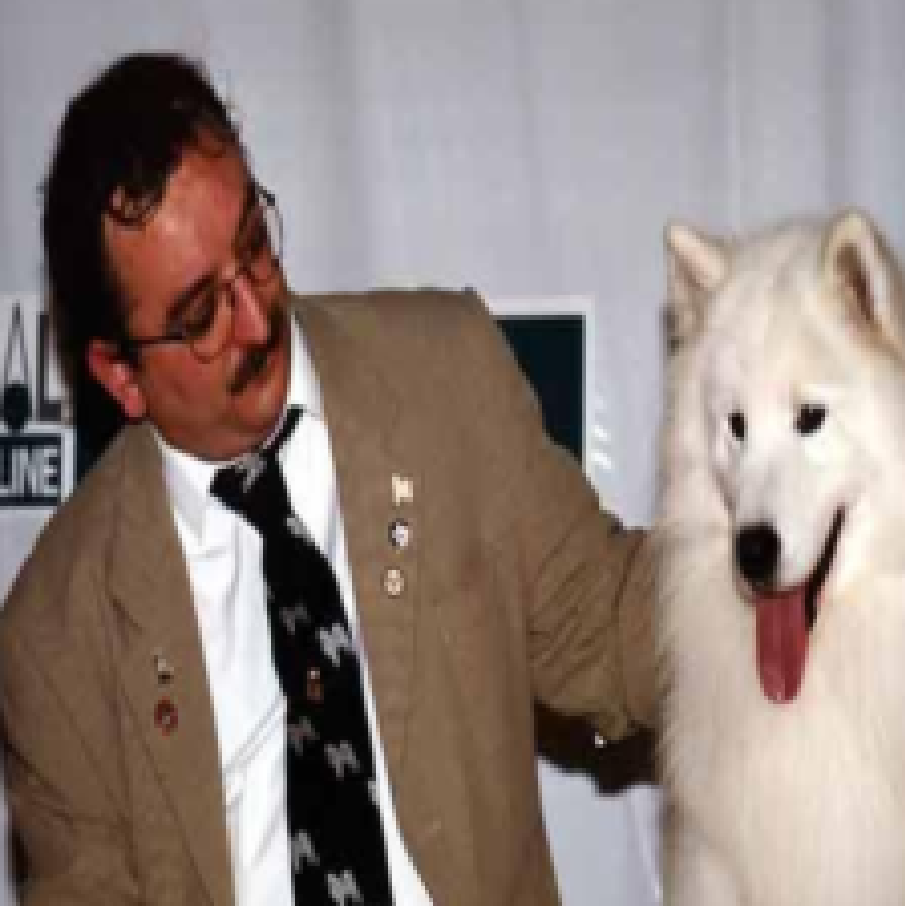} &
 \includegraphics[width=0.15\textwidth]{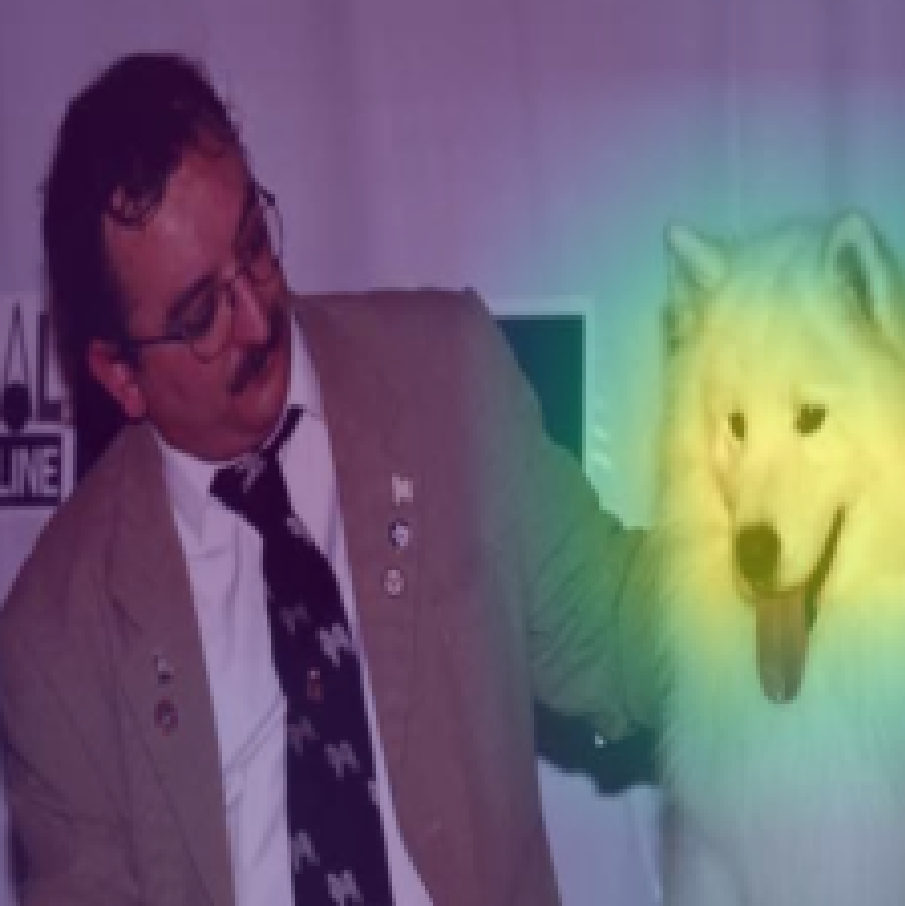} &
\includegraphics[width=0.15\textwidth]{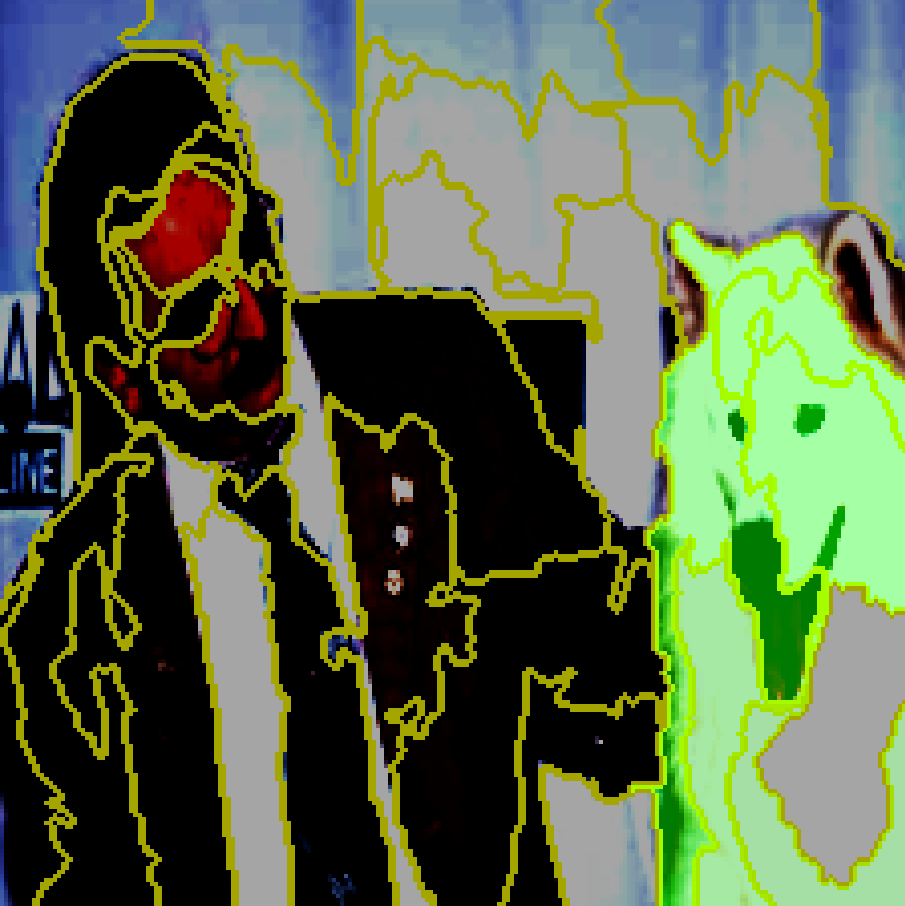} &
\includegraphics[width=0.15\textwidth]{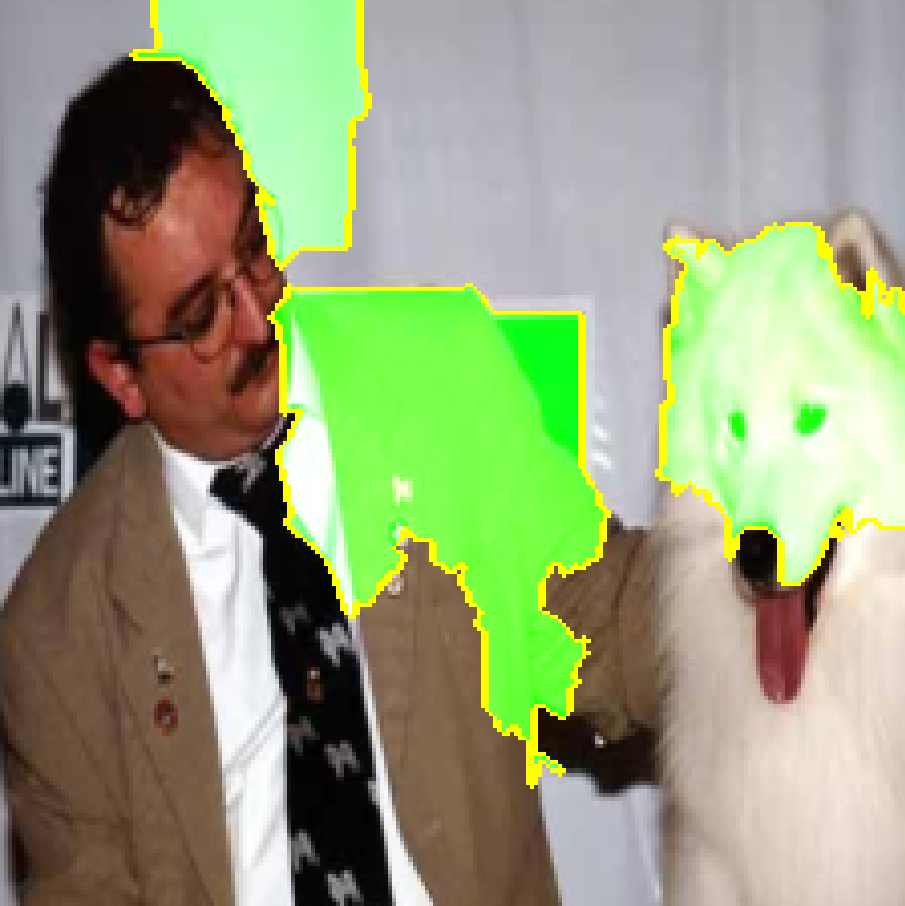} &
\includegraphics[width=0.15\textwidth]{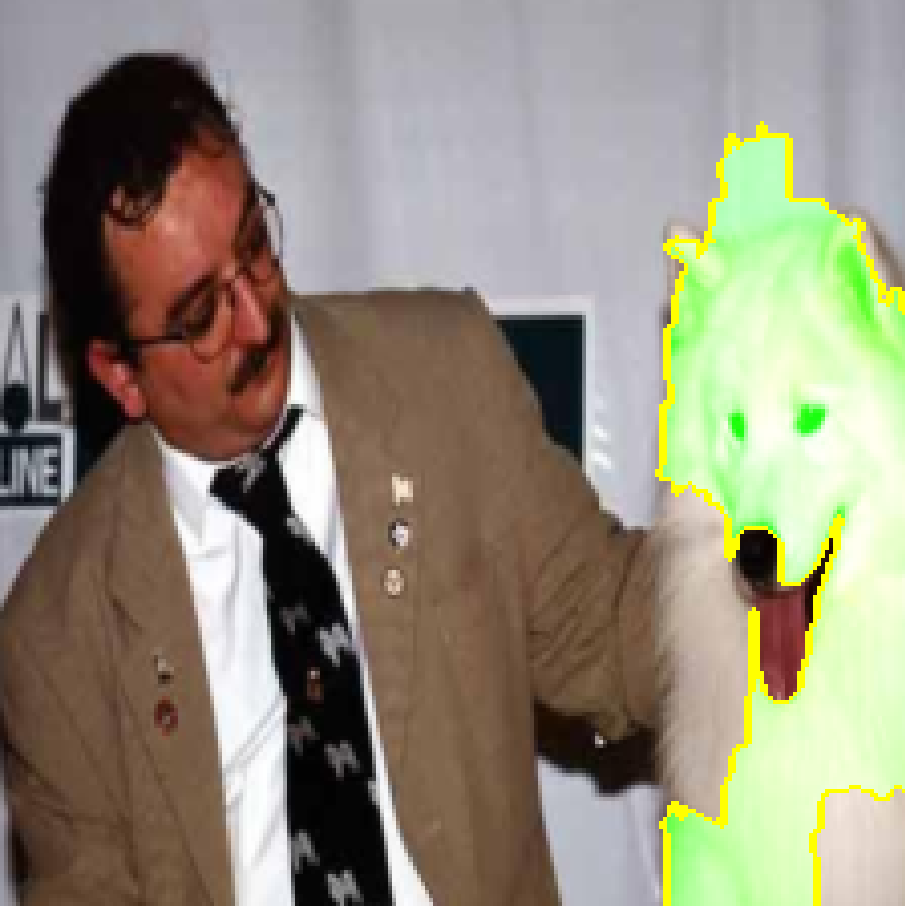}\\
\footnotesize P('Samoyed') = 0.77&&&&\\
 \includegraphics[width=0.15\textwidth]{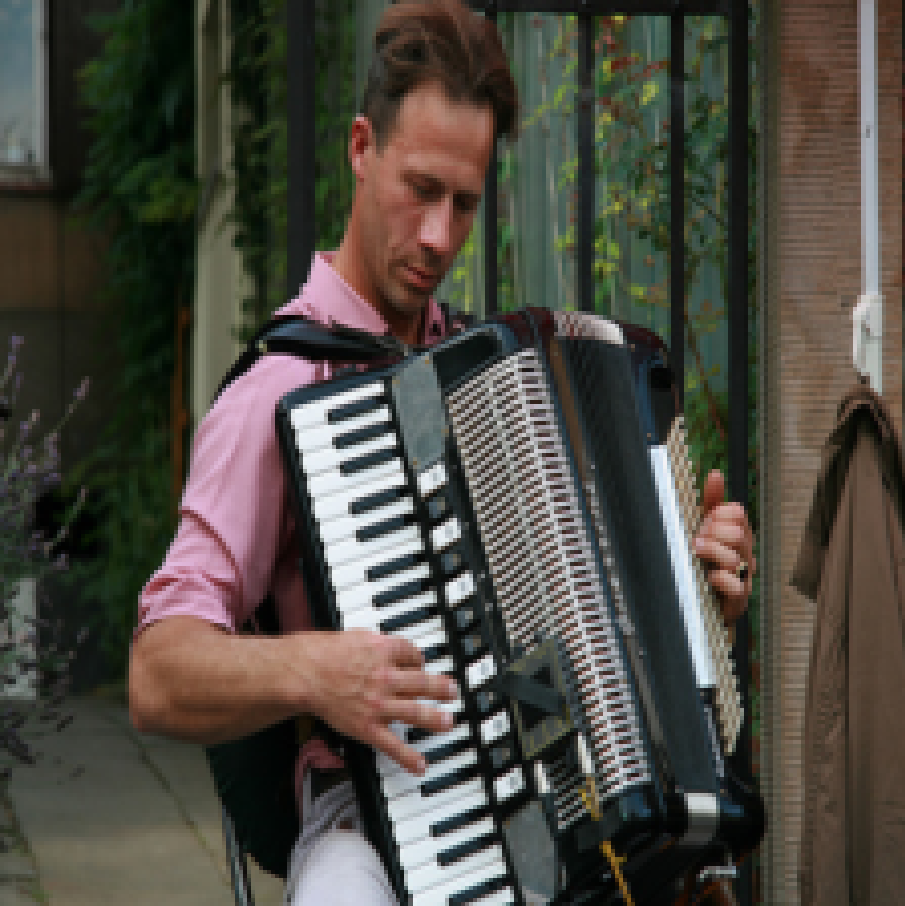} &
\includegraphics[width=0.15\textwidth]{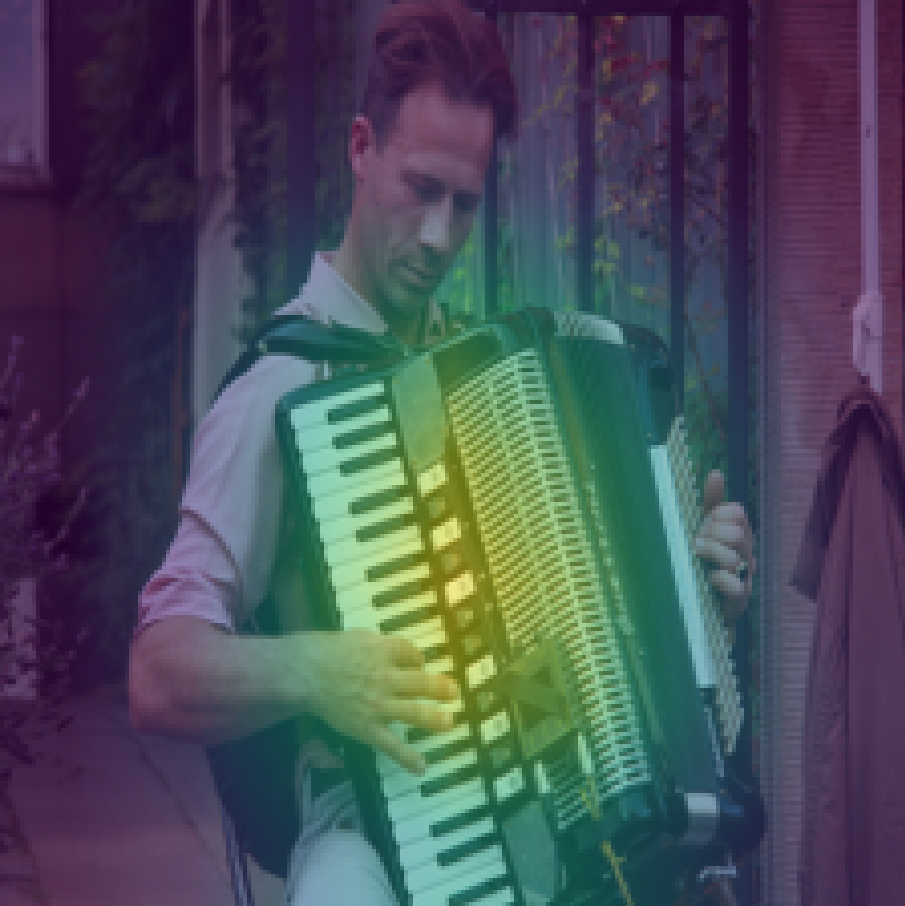} & \includegraphics[width=0.15\textwidth]{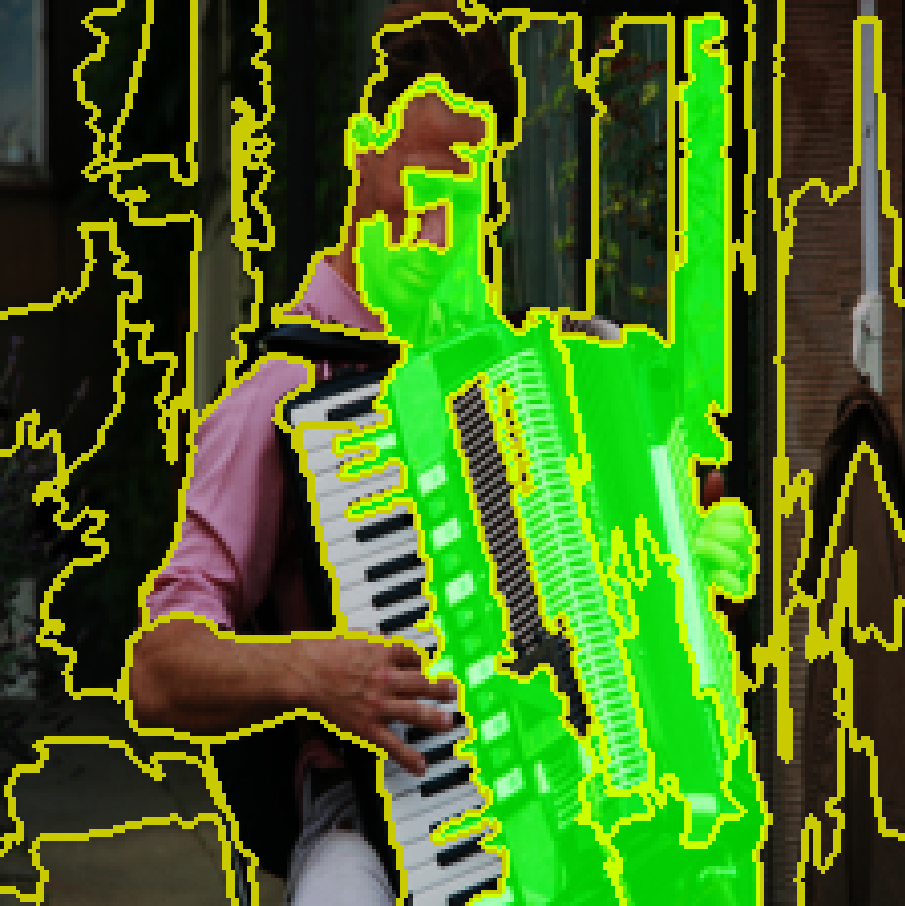} &
\includegraphics[width=0.15\textwidth]{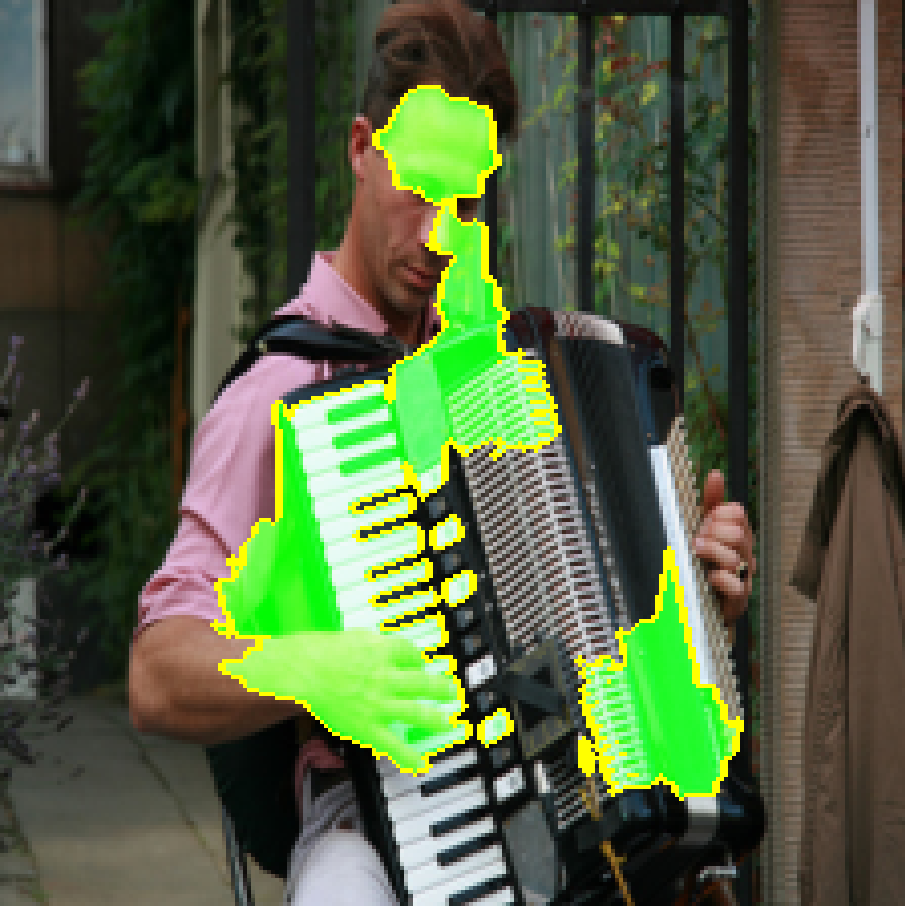} & \includegraphics[width=0.15\textwidth]{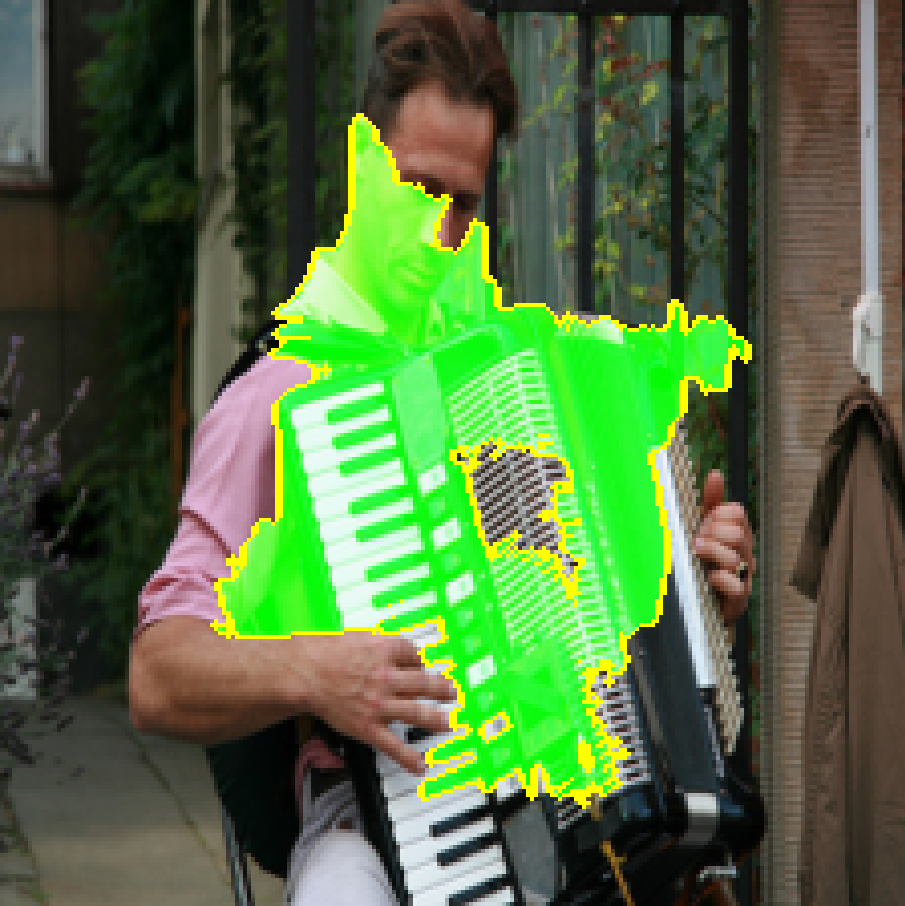}\\
\footnotesize P('Accordion') = 0.99&&&&\\
 \includegraphics[width=0.15\textwidth]{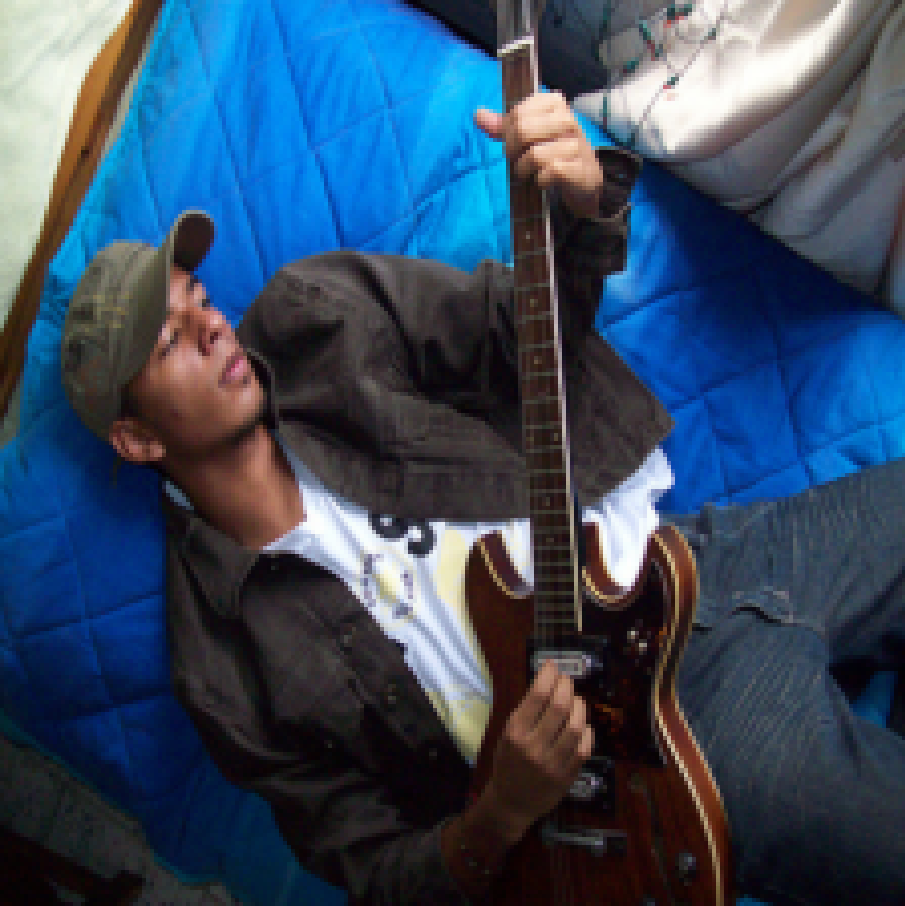} &
\includegraphics[width=0.15\textwidth]{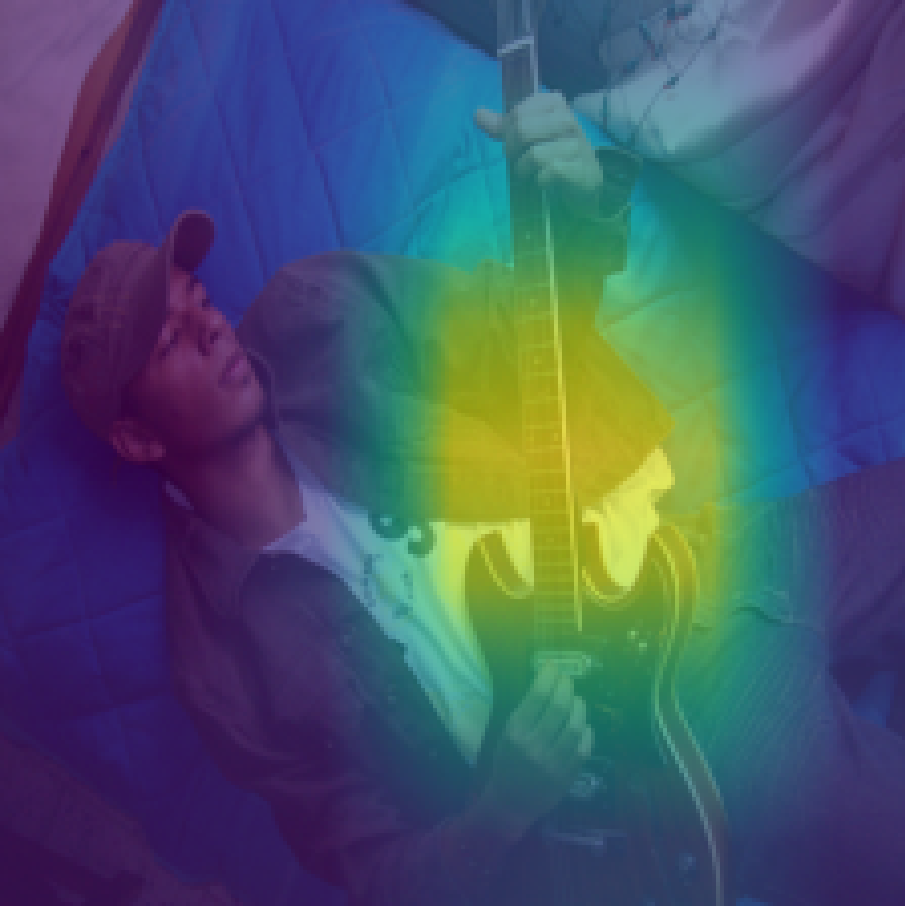} & \includegraphics[width=0.15\textwidth]{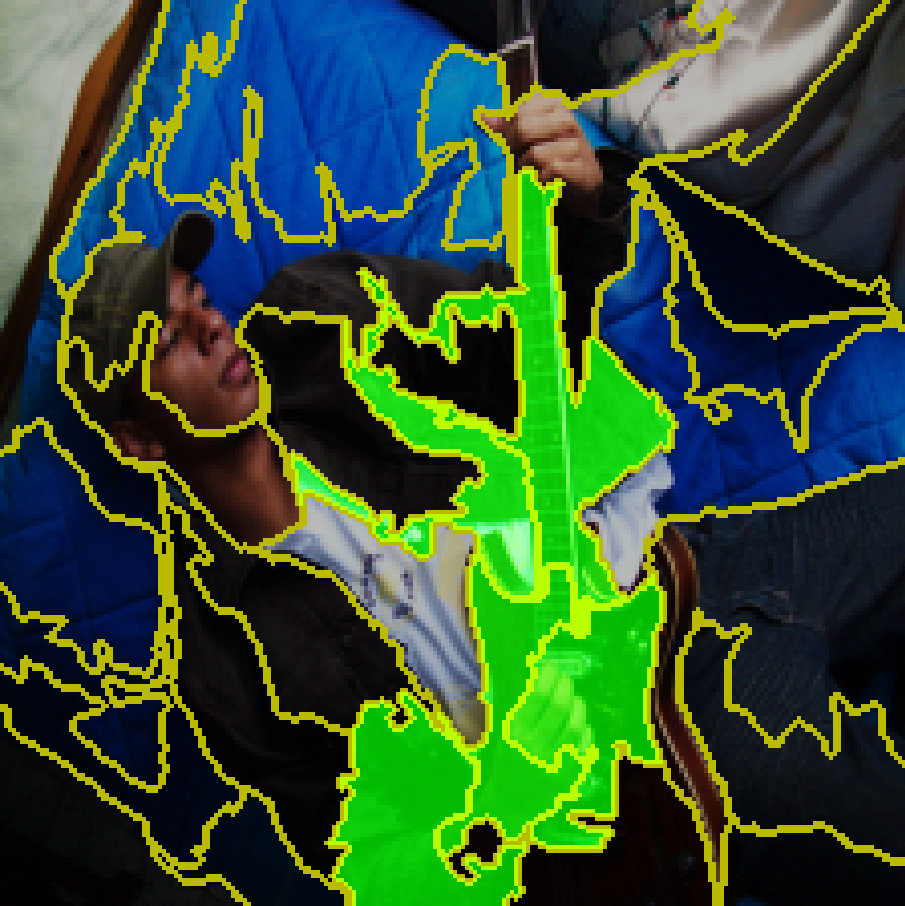}  &
\includegraphics[width=0.15\textwidth]{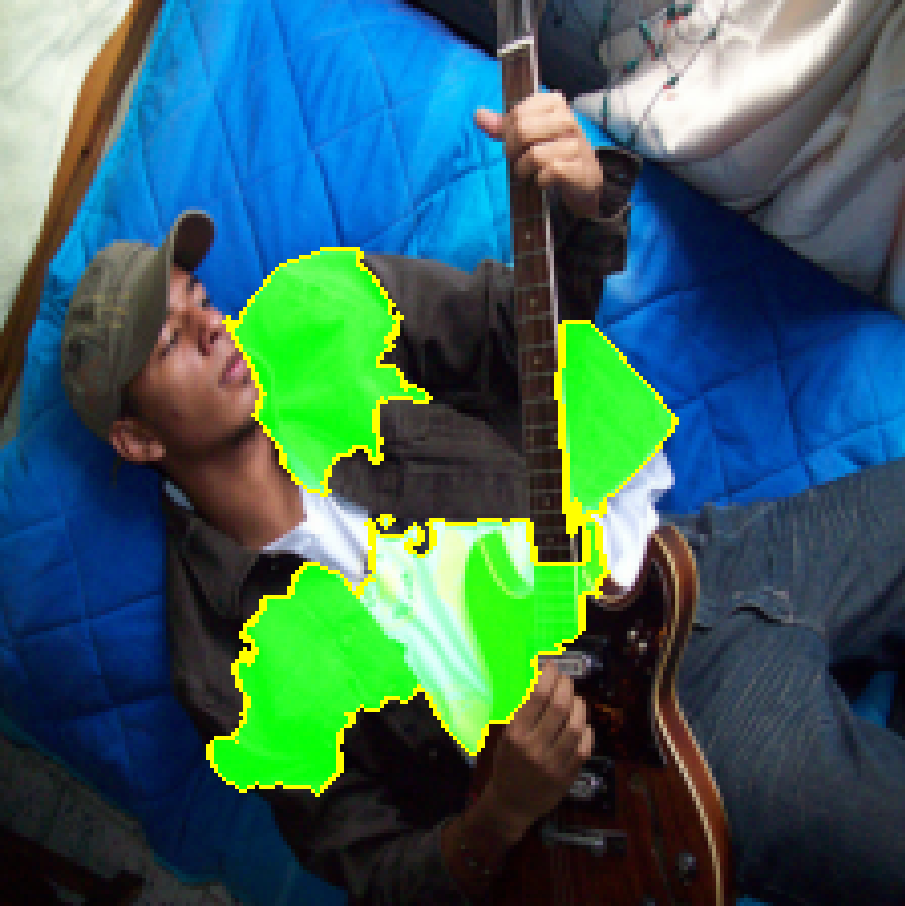} & \includegraphics[width=0.15\textwidth]{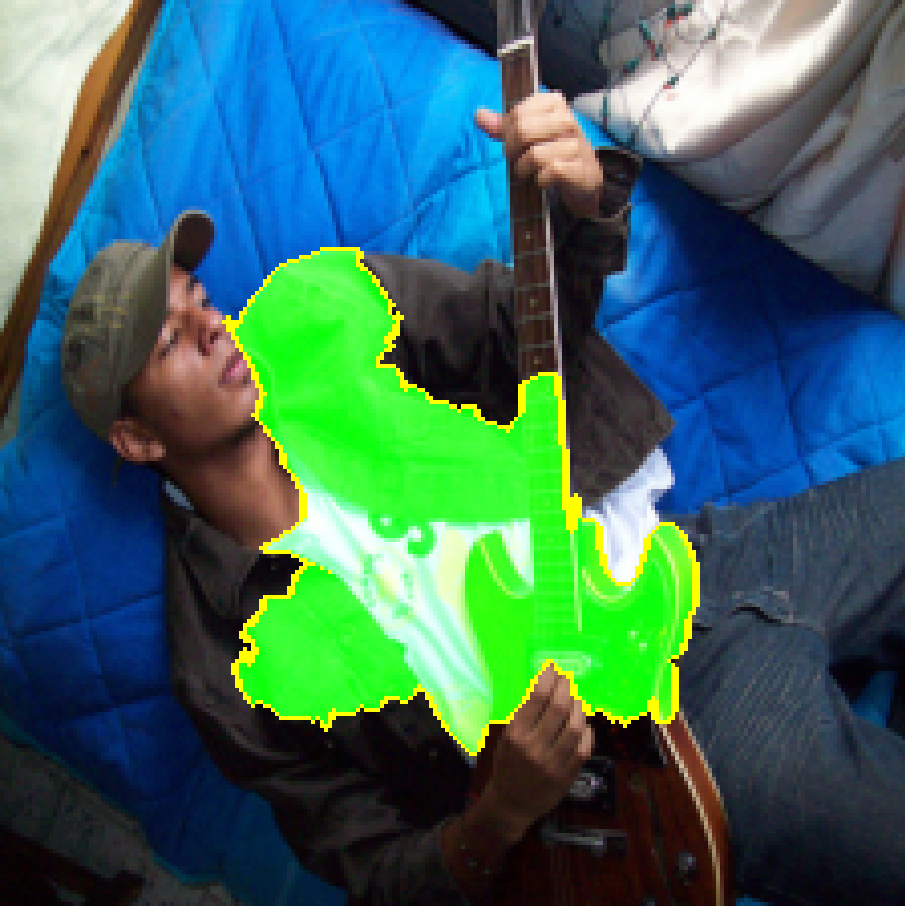}\\
\footnotesize P('Electric Guitar') = 0.33&&&&\\
 \includegraphics[width=0.15\textwidth]{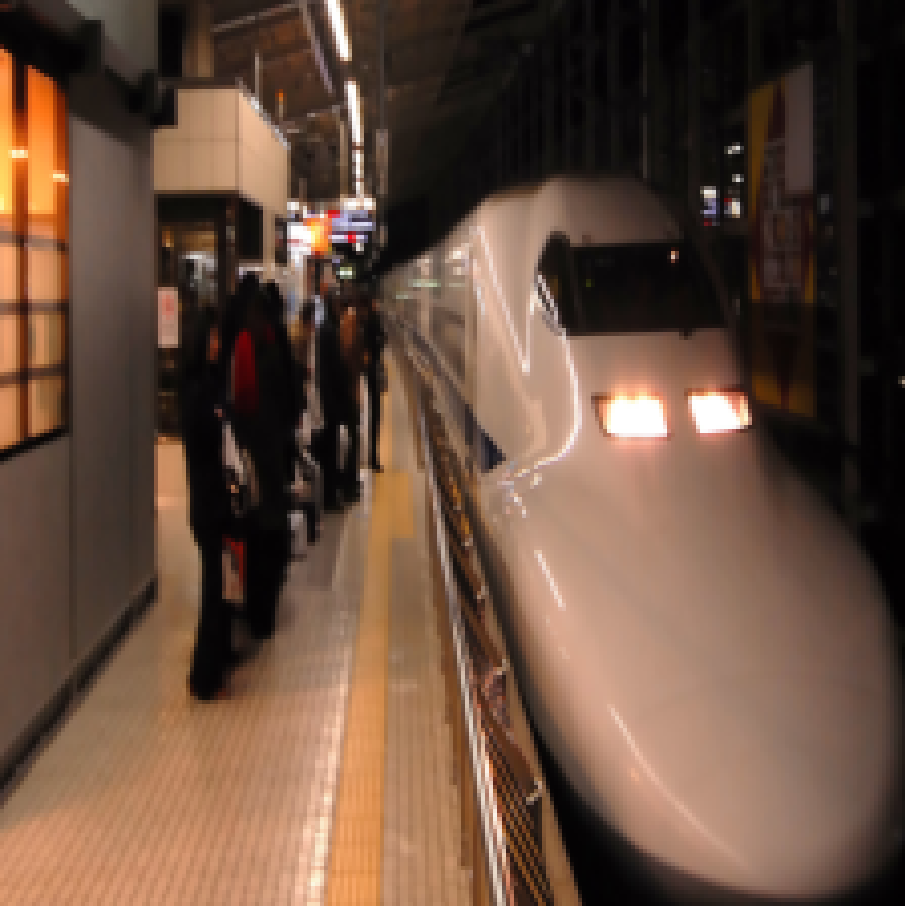} &
\includegraphics[width=0.15\textwidth]{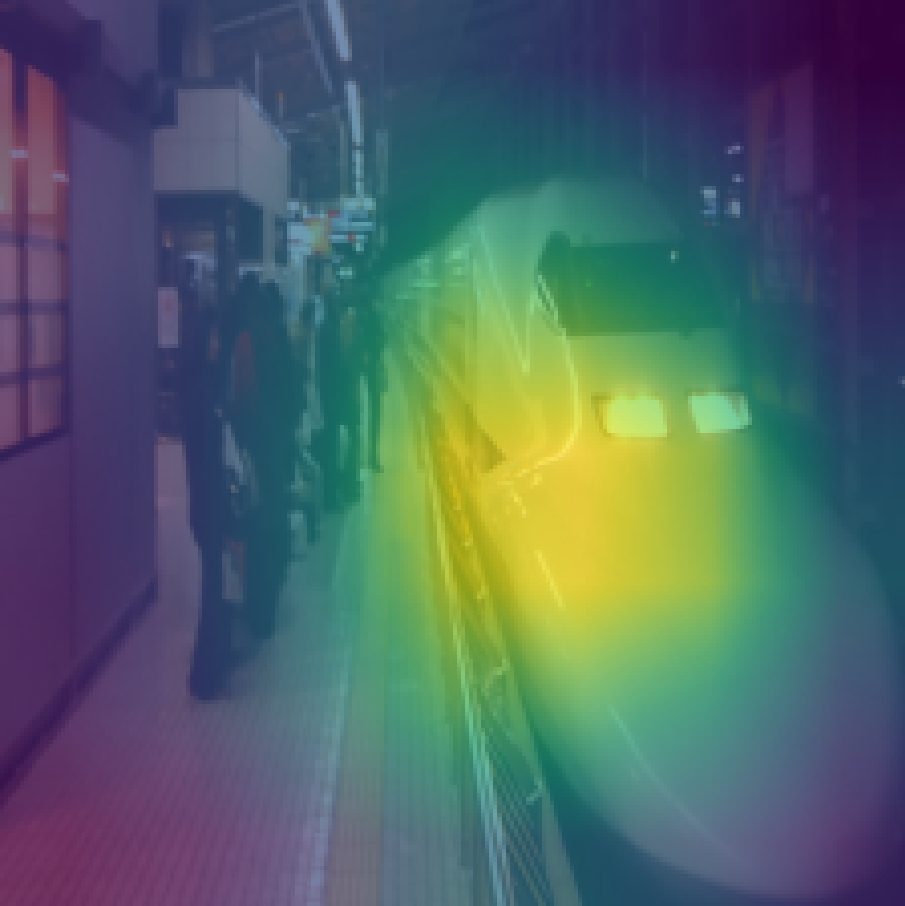} & \includegraphics[width=0.15\textwidth]{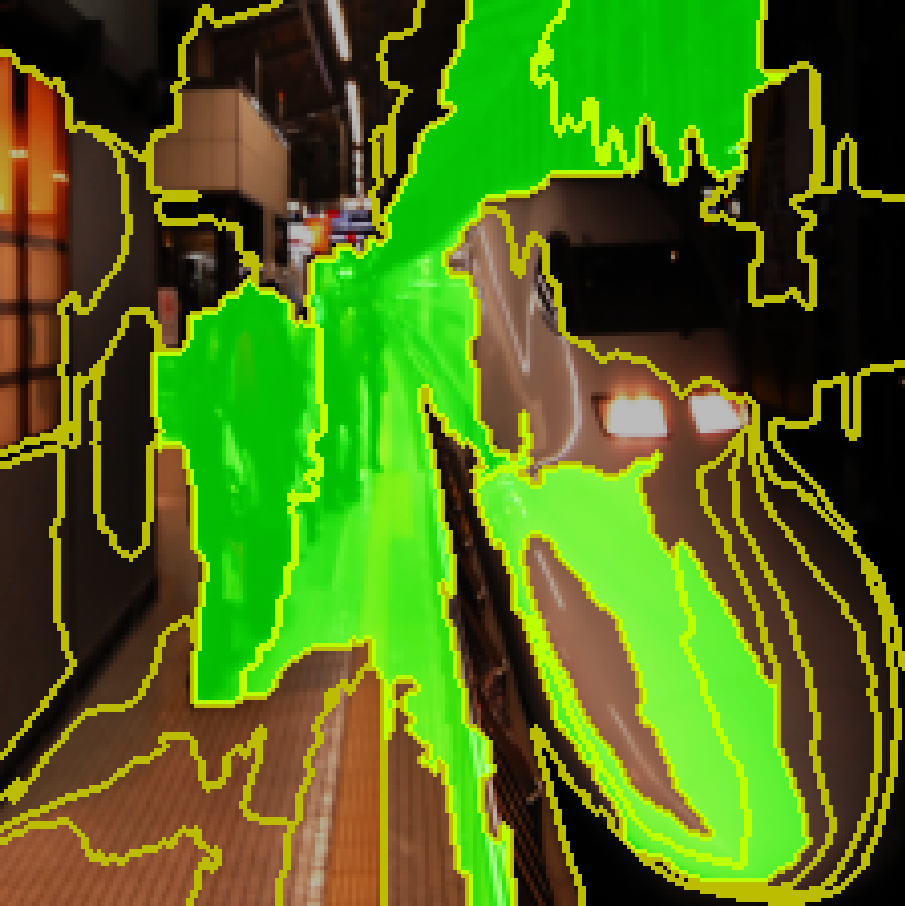}  &
\includegraphics[width=0.15\textwidth]{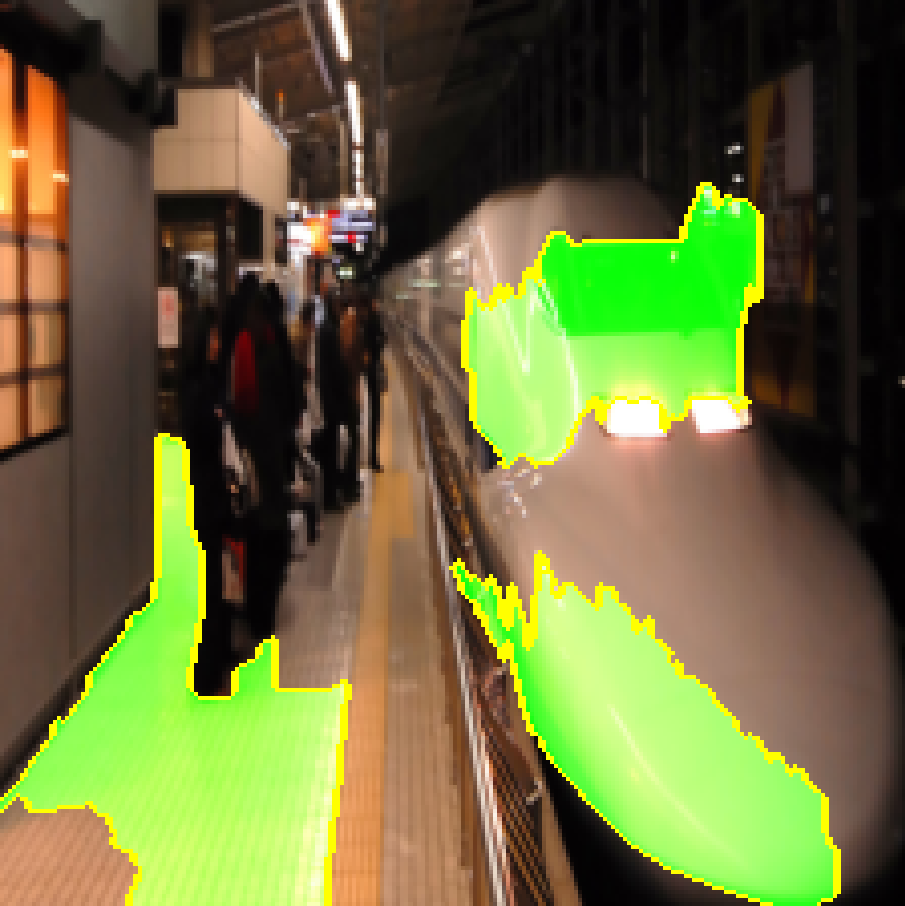} & \includegraphics[width=0.15\textwidth]{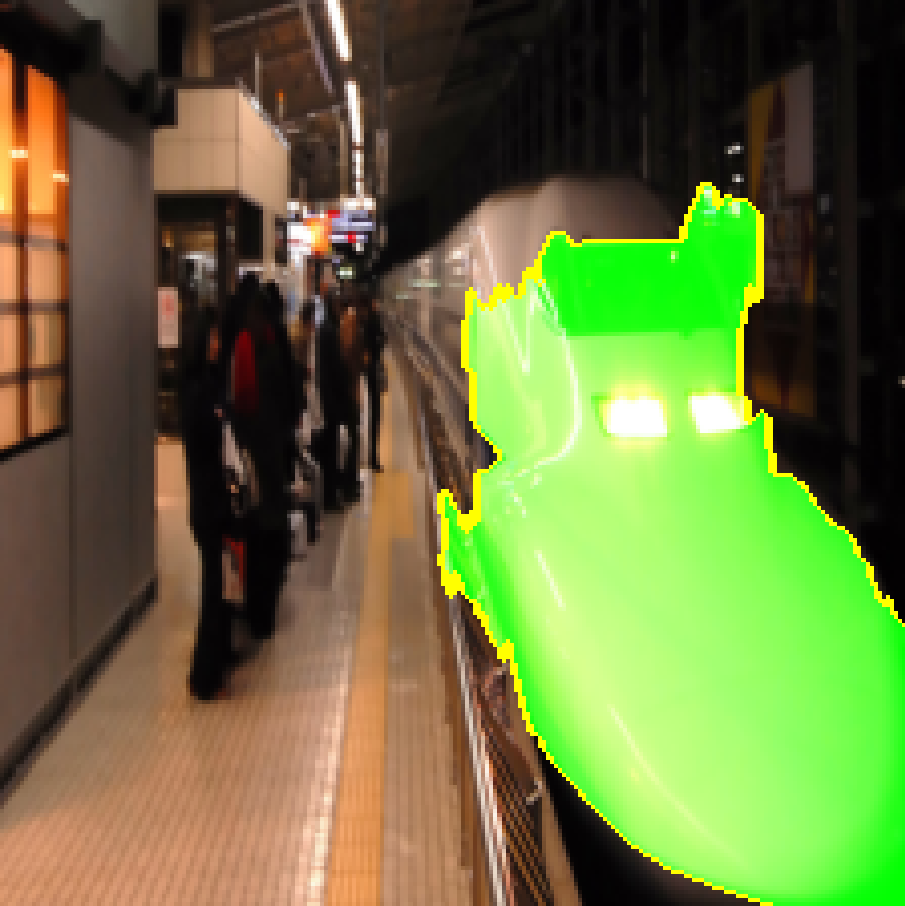}\\
\footnotesize P('Bullet Train') = 0.99&&&&\\
\end{tabular}
\caption{Comparisons of image explanation maps for Resnet-18\cite{resnet} using some samples from the Imagenet dataset\cite{imagenet}}
\label{fig:image}
\end{figure*}
\subsection{Experimental Setup}
We used the open-source GPyOpt\cite{gpyopt} library for UnRAvEL and the Scikit-Learn\citep{sklearn} implementation of the Extra Trees Regressor and Support Vector classifier to simulate the black-box prediction models\footnote{Source code available at \url{https://github.com/adityasaini70/UnRAvEL}} . 
\subsubsection{Baselines}
Since UnRAvEL derives only the standard deviation of each feature from the training data, we intend to compare it with methods employing similar assumptions in their workflow. Going through dense literature of XAI methods, we found that very few perturbation-based models agnostic XAI approaches exist, which do not use the training data as part of their workflow. Techniques like DLIME\cite{DLIME} cluster the entire training data while KernelSHAP requires an explicit passing of the training dataset as a background dataset in the sampling process. We use LIME and BayLIME\cite{BayLIME} as our baselines primarily because they take only the initial sample and variance from the training data. Apart from that, a linear kernel accompanied with a GP model is synonymous with doing Bayesian Linear regression; thus, we argue that UnRAvEL-L is a method synonymous with BayLIME. For the image datasets, we used the popular Grad-CAM\cite{GradCAM} tool to give an idea of the gradient-based explanation maps and plotted the explanation maps for LIME at $100$ samples and $10000$ samples.
\begin{table}
    
\caption{Stability metric computed for 10 randomly selected test samples. The bold values depict the most optimum results.}
  \begin{tabular}{cccccc}
    \toprule
    \textbf{Dataset} & \textbf{LIME} & \textbf{BayLIME} &\textbf{UnRAvEL-L} & \textbf{UnRAvEL}\\
    \midrule
    \textbf{Parkinson's} & 0.743& 0.738&0.499&\textbf{0.146}\\
    \textbf{Cancer} &0.826&0.824&0.655&\textbf{0.295}  \\
    \textbf{Adult} &0.520& 0.524&0.402&\textbf{0.288}  \\
    \textbf{Boston} &0.664&0.668&\textbf{0.462}& 0.539  \\
    \textbf{Bodyfat} &0.687&0.693&\textbf{0.503}& 0.701\\ 
    \bottomrule
  \end{tabular}
  \label{tab:stability_experiments}

\end{table}

\subsection{Evaluation criteria}
\subsubsection{Stability in repeated explanations}
For evaluating the inconsistency in explanations over multiple runs, we ran UnRAvEL, UnRAvEL-L, and the baselines using $100$ surrogate samples and collected $50$ consecutive explanations for $10$ randomly selected index samples for each of the five datasets described in Table\ref{tab:dataset_description}. We used the Jaccard's distance(J) score \cite{DLIME} for the explanation in the $i$-th, and $j$-th run can be computed as follows:
    \begin{equation}
        J(X_i,X_j) = 1-\frac{|X_i \cap X_j|}{|X_i \cup X_j|}
    \end{equation}
where $X_i$ and $X_j$ are sets consisting of top-5 features for iterations $i$ and $j$. Intuitively, it can be observed that $J(X_i,X_j)=0$ if $X_i$ and $X_j$ have the same features, and $J(X_i,X_j)=0$ if they have no common features. Thus, a consistent explainer module will have a relatively lower value of this metric than a relatively inconsistent explainer module. We averaged this metric over all possible combinations of iterations and the $10$ index samples. The results are as given in Table~\ref{tab:stability_experiments} where it can be be seen that for the three classification datasets, UnRAvEL outperforms both LIME and BayLIME. Moreover, for both the regression datasets, UnRAvEL-L, i.e., UnRAvEL with a Linear kernel, outperforms the rest. This shows the superior stability performance of UnRAvEL along with increased flexibility to employ any kernel.

We also plot the mean importance score in Fig.~\ref{fig:mip_comparison} as obtained from the ARD explainer for two samples from the Adult and Boston dataset to understand the type of explanations over which the stability score was computed. The plots were averaged over ten consecutive iterations per index sample.

\subsubsection{UnRAvEL Surrogate Data}
For evaluating the quality of the surrogate dataset generated by UnRAvEL, we collected the surrogate data generated during the stability experiment for two samples from the Adult dataset and used the sparse linear explainer for generating the importance scores. The violin plot for all the features can be found in Fig.~\ref{fig:surrogate_data_comparison}. It can be seen that the importance scores generated by UnRAvEL-LIME (UnRAvEL data coupled with sparse linear explainer) have very low uncertainty as compared to the importance scores generated by LIME. This supports our argument regarding the information gain aspect of the  surrogate data generated by UnRAvEL, using which even sparse linear models can compute highly consistent explanations. 

\subsubsection{UnRAvEL on Image Datasets}
To showcase the UnRAvEL's ability in the case of image datasets, we generated the prediction probabilities using a pre-trained ResNet-18 model\citep{resnet} for sample images from the Imagenet dataset\citep{imagenet}. We used a popular segmentation algorithm called Simple Linear Iterative Clustering (SLIC)\citep{slic} to form $50$ segments of every image and generated a corresponding feature space with continuous attributes where each feature depicted the presence of that index component in the original image. We then used the UnRAvEL classification module on this newly generated feature space to generate the importance score. As a baseline, we plotted the output from Grad-CAM and LIME at $100$ and $10000$ samples. The results are as given in Fig.\ref{fig:image}. UnRAvEL at just $100$ samples can produce explanations that are semantically accurate and are consistent with LIME at $10000$ samples and Grad-CAM. This shows that UnRAvEL is an efficient sample method generating compact yet informative surrogate datasets. 

\section{Conclusions and Future Work}
In this work, we proposed UnRAvEL, a novel explainable AI method that employs a novel acquisition function based on Active Learning, followed by a Gaussian process regression model for obtaining locally interpretable, model agnostic explanations. Through experiments, we showed the stability of UnRAvEL as compared to LIME and BayLIME on tabular structured data. Moreover, we showed the capabilities of UnRAvEL as a surrogate data generation module that can be coupled with custom explainer modules as well. We also showed the sample efficiency of UnRAvEL as compared to LIME on the Imagenet dataset. 

To the best of the authors' knowledge, this is the first work that designs a novel information-gain-based acquisition function, FUR (Faithful Uncertainty Reduction), in the context of Explainable AI and paves the way for many such  exciting directions. The acquisition function used in UnRAvEL allows it to ultimately be independent of the explainer module. This observation can be extended to devise a global explanation module of UnRAvEL, which is a part of our future work. As seen from the experiments, the kernel used in the Gaussian Process explainer can be utilized in many domain-specific applications to make UnRAvEL more robust and stable.
\begin{acks}
This work was supported by the DST funded TiH project and I-HUB Anubhuti Chanakya Fellowship Grant.
\end{acks}

\bibliographystyle{ACM-Reference-Format.bst}
\bibliography{paper.bib}

\newpage
\appendix
\section{Lemma 1}
\begin{lemma}
Let $\mathcal{X} \subset \mathbb{R}^d$ be a compact space where $\gamma = \max_{\vecx_1,\vecx_2 \in \mathcal{X}} ||\vecx_1 - \vecx_2||_2$. Then,for $\gamma > \epsilon_1 >0$, we have
$\Pr||\vecx_{n,g} - \vecx_{n,l}||_2 \geq \epsilon_1) \leq \frac{d^n_{0,g}}{\epsilon_1} + \frac{(1-\beta_0)\gamma}{\epsilon_1}$ where $\beta_0$ is the probability that some local optimizer is coincided with the index sample $\vecx_0$ and $d^n_{0,g} = \Ex[||\vecx_{n,g} - \vecx_0||_2]$.
\end{lemma}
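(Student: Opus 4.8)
The plan is to treat $\|\vecx_{n,g} - \vecx_{n,l}\|_2$ as a nonnegative random variable, bound its tail by Markov's inequality, and then control its expectation by routing the distance through the index sample $\vecx_0$. Since $\epsilon_1 > 0$, Markov gives
\begin{equation}
\Pr(\|\vecx_{n,g} - \vecx_{n,l}\|_2 \geq \epsilon_1) \leq \frac{\Ex[\|\vecx_{n,g} - \vecx_{n,l}\|_2]}{\epsilon_1}.
\end{equation}
The remaining work is to show that the numerator is at most $d^n_{0,g} + (1-\beta_0)\gamma$, after which the claimed bound follows by splitting the fraction.

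To estimate the expected distance, I would insert $\vecx_0$ and apply the triangle inequality together with linearity of expectation, obtaining
\begin{equation}
\Ex[\|\vecx_{n,g} - \vecx_{n,l}\|_2] \leq \Ex[\|\vecx_{n,g} - \vecx_0\|_2] + \Ex[\|\vecx_0 - \vecx_{n,l}\|_2].
\end{equation}
The first term equals $d^n_{0,g}$ by its definition. For the second term I would condition on whether the local optimizer coincides with the index sample: by assumption the event $\{\vecx_{n,l} = \vecx_0\}$ has probability $\beta_0$, and on it the distance vanishes; on the complementary event, of probability $1-\beta_0$, both $\vecx_0$ and $\vecx_{n,l}$ lie in the compact set $\mathcal{X}$, so their distance is bounded by the diameter $\gamma = \max_{\vecx_1,\vecx_2 \in \mathcal{X}}\|\vecx_1-\vecx_2\|_2$. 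This yields $\Ex[\|\vecx_0 - \vecx_{n,l}\|_2] \leq (1-\beta_0)\gamma$, and combining the two terms gives the desired expectation bound.

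Substituting back into the Markov estimate produces the stated inequality, with the hypothesis $\gamma > \epsilon_1$ ensuring the bound is informative (the right-hand side can fall below one). The step I would treat most carefully is the conditioning argument for $\Ex[\|\vecx_0 - \vecx_{n,l}\|_2]$: one must interpret $\beta_0$ precisely as $\Pr(\vecx_{n,l}=\vecx_0)$ and verify that $\gamma$ is a legitimate almost-sure upper bound on the distance over the complementary event, which is exactly where compactness of $\mathcal{X}$ is used. Everything else is a routine combination of Markov's inequality and the triangle inequality, so I do not expect any genuine obstacle beyond stating the conditioning cleanly.
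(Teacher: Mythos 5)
Your proposal is correct and follows essentially the same route as the paper's proof: Markov's inequality on $\|\vecx_{n,g}-\vecx_{n,l}\|_2$, the triangle (Minkowski) inequality through the index sample $\vecx_0$, and a conditioning on the event $\{\vecx_{n,l}=\vecx_0\}$ of probability $\beta_0$ with the diameter $\gamma$ bounding the distance on the complement. Your explicit statement of the law-of-total-expectation step is in fact cleaner than the paper's, which delegates it to external lemmas.
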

\begin{proof}
Using Markov inequality for $\epsilon_1$, 
\begin{align}
    &\Pr||\vecx_{n,g} - \vecx_{n,l}||_2 \geq \epsilon_1) \leq \frac{1}{\epsilon_1}\Ex[||\vecx_{n,g} - \vecx_{n,l}||_2]\nonumber\\
    &= \frac{1}{\epsilon_1}\Ex[||\vecx_{n,g} - \vecx_0 + \vecx_0 +  \vecx_{n,l}||_2]\nonumber\\
    &\leq \frac{1}{\epsilon_1}\Ex[||\vecx_{n,g} - \vecx_0||_2] + \frac{1}{\epsilon_1}\Ex[||\vecx_{0} - \vecx_{n,l}||_2]
\end{align}
where the last inequality is obtained using the Minkowski’s inequality. Here,  $d^n_{0,g} = \Ex||\vecx_0 - \vecx_{n,g}||_2$ does not depend on the local optimizer,and we assume it to be a constant. Let $\beta_0$ which represents the probability that the local optimizer is coincided with $\vecx_0$. Using lemmas~1-6 in \cite{kim2020local} we have,
\begin{align}
    &\Pr||\vecx_{n,g} - \vecx_{n,l}||_2 \geq \epsilon_1) \leq \frac{d^n_{0,g}}{\epsilon_1} + \frac{\beta_0}{\epsilon_1}{||\vecx_0 - \vecx_{n,l}||}_{\vecx_{n,l} = \vecx_0} \nonumber\\
    &+ \frac{1-\beta_0}{\epsilon_1}{||\vecx_0 - \vecx_{n,l}||}_{\vecx_{n,l} \neq \vecx_0}\nonumber\\
    & \leq \frac{d^n_{0,g}}{\epsilon_1} + \frac{(1-\beta_0)\gamma}{\epsilon_1}
\end{align}
\end{proof}

For M-Lipshitz continuous functions, we have (Lemma~8 of \cite{kim2020local})
\begin{align}
    \Pr\left( \frac{|f_e(\vecx_0) -f_e(\vecx_{n,l})|}{||\vecx_0 - \vecx_{n,l}||_2}\geq \epsilon_2\right)\leq \frac{M}{\epsilon_2},
\end{align}
where $M$ is the Lipschitz constant of function $f_e$, $f_e$ is M-Lipschitz continuous and $\vecx_{n,l} , \vecx_{0} \in \mathcal{X}$.

\section{Proof of Theorem~1:}
\begin{proof}
First, we write
\begin{align}
    \Pr(|r_{n,g} - r_{n,l}<\epsilon_l) = \Pr(|f_e(\vecx_{n,g}) - f_e(\vecx_{n,l})|<\epsilon_l).
    \label{eq:1Proof}
\end{align}
Here, the event $(|f_e(\vecx_{n,g}) - f_e(\vecx_{n,l})|<\epsilon_l)$ can be rewritten as $(|f_e(\vecx_{n,g}) - f_e(\vecx_0) + f_e(\vecx_0) -  f_e(\vecx_{n,l})|<\epsilon_l)$. Let $\delta_n = |f_e(\vecx_{n,g}) - f_e(\vecx_0)|$, which is independent of the local optimizer. We rewrite 
\eqref{eq:1Proof} as
\begin{align}
    &\Pr(|f_e(\vecx_{n,g}) - f_e(\vecx_{n,l})|<\epsilon_l)\nonumber\\
    &= \Pr(-\epsilon_l-\delta_n < (f_e(\vecx_{n,g}) - f_e(\vecx_{n,l})) < \epsilon_l - \delta_n)\nonumber\\ 
    &\leq  \Pr(|f_e(\vecx_{0}) - f_e(\vecx_{n,l})|<\epsilon_l+\delta_n),
\end{align}
where we obtain the final inequality by  comparing the probabilities on events $-\epsilon_l-\delta_n\leq  f_e(\vecx_{n,g}) - f_e(\vecx_{n,l}) \leq \epsilon_l - \delta_n$ and $|f_e(\vecx_{0}) - f_e(\vecx_{n,l})|<\epsilon_l+\delta_n$. Note that
\begin{align}
    &\Pr(|f_e(\vecx_{0}) - f_e(\vecx_{n,l})|<\epsilon_l+\delta_n) \nonumber\\
    &= \Pr(||\vecx_0 - \vecx_{n,l}||_2.\frac{|f_e(\vecx_{0}) - f_e(\vecx_{n,l})|}{||\vecx_0 - \vecx_{n,l}||_2}<\epsilon_l+\delta_n).
\end{align}
We define two events:
\begin{align}
    E_1 &= (||\vecx_0 - \vecx_{n,l}||_2 < \eta_1)\nonumber\\
    E_2 &= (\frac{|f_e(\vecx_{0}) - f_e(\vecx_{n,l})|}{||\vecx_0 - \vecx_{n,l}||_2} < \eta_2),
\end{align}
where $\eta_1\eta_2 = \epsilon_l+\delta_n$. Since $P(E_1 \cap E_2) \geq 1 - P(E_1^C) - P(E_2^C)$, where $E_1^C$ represents the complement of the event. Hence, 
\begin{align}
   &\Pr(|r_{n,g} - r_{n,l}<\epsilon_l) \geq  \Pr(E_1 \cap E_2) \nonumber\\
   &\geq 1 - \Pr(||\vecx_0 - \vecx_{n,l}||_2 > \eta_1) - \Pr(\frac{|f_e(\vecx_{0}) - f_e(\vecx_{n,l})|}{||\vecx_0 - \vecx_{n,l}||_2} > \eta_2) \nonumber\\
   &\geq 1 - \frac{d^n_{0,g}}{\eta_1} - \frac{(1-\beta_0)\gamma}{\eta_1} - \frac{M}{\eta_2}.
\end{align}
\end{proof}

\end{document}